\newtheorem{theorem}{Theorem}
\newtheorem{definition}{Definition}
\newtheorem{lemma}{Lemma}
\newtheorem{assumption}{Assumption}
\title{Mitigating Hidden Confounding by Progressive Confounder Imputation via Large Language Models}
\author{%
	Hao Yang$^{1}$\quad
	Haoxuan Li$^{2}$ \quad
	Luyu Chen$^{1}$ \quad
	Haoxiang Wang$^{3}$ \quad \\
        \textbf{Xu Chen}$^{1}$\thanks{Corresponding authors.} \quad
        \textbf{Mingming Gong}$^{4}$\footnotemark[1] \\
	$^{1}$Gaoling School of Artificial Intelligence, Renmin University of China \\
	$^{2}$Center for Data Science, Peking University \\
	$^{3}$School of Mathematical Sciences, Peking University \\
        $^{4}$Department of Machine Learning, Mohamed bin Zayed University of Artificial Intelligence \\
	\texttt{hao.yang@ruc.edu.cn, hxli@stu.pku.edu.cn, luyu.chen@ruc.edu.cn,}\\
    \texttt{whxwhx@pku.edu.cn, xu.chen@ruc.edu.cn, mingming.gong@unimelb.edu.au} \\
}
\begin{document}

\maketitle

\begin{abstract}
Hidden confounding remains a central challenge in estimating treatment effects from observational data, as unobserved variables can lead to biased causal estimates. 
While recent work has explored the use of large language models (LLMs) for causal inference, most approaches still rely on the unconfoundedness assumption. In this paper, we make the first attempt to mitigate hidden confounding using LLMs. We propose ProCI (Progressive Confounder Imputation), a framework that elicits the semantic and world knowledge of LLMs to iteratively generate, impute, and validate hidden confounders. 
ProCI leverages two key capabilities of LLMs: their strong semantic reasoning ability, which enables the discovery of plausible confounders from both structured and unstructured inputs, and their embedded world knowledge, which supports counterfactual reasoning under latent confounding. 
To improve robustness, ProCI adopts a distributional reasoning strategy instead of direct value imputation to prevent the collapsed outputs.
Extensive experiments demonstrate that ProCI uncovers meaningful confounders and significantly improves treatment effect estimation across various datasets and LLMs.
\end{abstract}

\section{Introduction}
Estimating treatment effects from observational data is a central problem in causal inference, with widespread applications in healthcare~\cite{prosperi2020causal,grootendorst2007review}, social sciences~\cite{gangl2010causal}, and economics~\cite{varian2016causal,sun2024end}. 
Different from Randomized Controlled Trials (RCTs), the non-random treatment assignments between treatment and control groups in observational studies can lead to imbalanced confounders between groups, which has been recognized as a key contributor to non-causal indirect associations~\cite{Pearl2009CausalII}.
The bias caused by imbalanced confounders is referred to as confounding bias~\cite{,shalit2017estimating,wang2023optimal}.

To estimate treatment effects in an unbiased manner from observational data, recent advances have focused on representation-based methods that aim to balance latent distributions between treatment and control groups~\cite{shalit2017estimating,wang2023optimal,assaad2021counterfactual,yao2018representation}. 
More recently, with the emergence of large language models (LLMs) demonstrating remarkable capabilities in reasoning and knowledge understanding, a plenty of work has begun to explore their potential in treatment effect estimation~\cite{ma2024causal,huyuk2024reasoning,jin2023cladder,han2024causal,imai2024causal}. 
For example,~\cite{jin2023cladder} prompts LLMs using chain-of-thought reasoning to generate natural language rationales that support causal conclusions, while~\cite{imai2024causal} leverages LLMs to encode high-dimensional text treatments for treatment effect estimation while mitigating confounding.

However, existing methods including those based on LLMs, mostly rely on the \textit{unconfoundedness assumption}~\cite{Pearl2009CausalII}, which posits that all confounders affecting both treatment and outcome are observed. 
This assumption is often unrealistic in real-world settings, where important factors may be unobserved or entirely unrecorded~\cite{Ananth2018HiddenBI}. 
For example, in healthcare, treatment decisions may be influenced not only by observed clinical features such as test results or age, but also by unobserved factors like socioeconomic status, which can act as confounders if omitted.
To address hidden confounding issue, prior work has explored three main directions. 
Sensitivity analysis aims to quantify the impact of hidden confounding by deriving bounds on treatment effects~\cite{Rosenbaum1983AssessingST,Robins2000TwoPO}, but typically relies on fixed and untestable assumptions about the confounding mechanism~\cite{Franks2018FlexibleSA,Veitch2020SenseAS}. 
Auxiliary variable methods, including instrumental variables and front-door adjustment, use external information or intermediate pathways to recover unbiased estimates~\cite{Li2023RemovingHC,Fulcher2017RobustIO,Shah2023FrontdoorAB}, but depend on strong structural assumptions that are rarely verifiable~\cite{Imbens2014InstrumentalVA,Wu2022InstrumentalVR,Bellemare2024ThePO}. 
RCT integration methods combine randomized and observational data to correct for hidden confounding bias~\cite{Kallus2018RemovingHC,Hatt2022CombiningOA,Wu2022IntegrativeRO}, but the high cost and limited availability of RCTs often restrict their practical utility.

\textbf{To fill this gap, we make the first attempt to leverage LLMs to mitigate hidden confounding in treatment effect estimation.}
Compared to traditional methods, LLMs offers two key advantages. 
First, LLMs possess strong semantic reasoning capabilities that allow them to interpret both structured covariates and unstructured textual descriptions, enabling the discovery of plausible confounders that are not explicitly recorded in the data. 
Second, LLMs embed extensive world knowledge learned from large-scale corpora, which implicitly captures a wide range of latent confounders, allowing them to support counterfactual reasoning even under hidden confounding.

To elicit these capabilities in practice, we propose \textbf{ProCI} (Progressive Confounder Imputation), a novel framework that leverages LLMs to iteratively uncover and adjust for hidden confounders. 
ProCI alternates between two phases: (1) \textit{confounder imputation}, where the LLM is prompted to generate a plausible missing confounder based on the semantics of the observed variables, and to impute its concrete values; and (2) \textit{unconfoundedness validation}, which builds on the LLM’s counterfactual reasoning ability—by prompting the LLM to impute missing potential outcomes, we empirically test whether the generated confounders restore conditional independence between treatment and outcomes. 
Rather than directly predicting values, which we find empirically leads to collapsed or inconsistent outputs, ProCI adopts a distributional reasoning strategy: it first elicits from the LLM the most plausible distribution type based on its world knowledge, and then infers distribution parameters to generate diverse, realistic samples. 
This progressive process continues until the generated confounders pass the independence test, indicating that key sources of hidden confounding have been mitigated. 
By incorporating LLMs into the treatment effect estimation pipeline, ProCI provides a flexible and scalable solution for mitigating hidden confounding in observational studies.
Our contributions can be summarized as follows:

$\bullet$ To the best of our knowledge, this is the first work that leverages LLMs to mitigate hidden confounding in treatment effect estimation. This is enabled by eliciting both the semantic signals in observational data and the implicit confounding knowledge embedded in LLMs.

$\bullet$ We propose ProCI, a progressive framework that elicits LLMs to generate and impute plausible confounders by jointly leveraging textual descriptions and structured covariates. The sufficiency of the generated confounders is validated through LLM-based counterfactual prediction and an empirical test of conditional independence.

$\bullet$ Extensive experiments across diverse datasets and multiple LLM architectures demonstrate the effectiveness and generality of our approach, showing improved performance in uncovering hidden confounders and estimating treatment effects.

\section{Preliminaries}
\subsection{Problem Setup}
We consider the standard setup in binary treatment effect estimation\footnote{While we focus on the binary treatment setting for clarity, the proposed method of this paper can be directly extended to other scenarios, such as multi-valued or continuous treatments.}. 
Let $T \in \{0,1\}$ denote a binary treatment assignment, where $T=1$ indicates receiving the treatment and $T=0$ denotes the control group. Let $Y \in \mathbb{R}$ be the observed outcome, and $X \in \mathcal{X}$ be the observed covariates. The observational dataset consists of i.i.d. samples $\{(x_i, t_i, y_i)\}_{i=1}^n \sim \mathcal{D}$.
Under the potential outcomes framework~\cite{rubin2005causal}, each unit is associated with two potential outcomes: $Y^1$ and $Y^0$, representing the outcomes the unit would receive under treatment and control, respectively. 
In practice, only one factual outcome $Y$ is observed for each unit, depending on the assigned treatment.
The individual-level treatment effect is defined as:
\begin{equation}
\label{eq:cate}
    \tau(x) = \mathbb{E}[Y^1 - Y^0 \mid X=x],
\end{equation}
which is commonly referred to as the \emph{Conditional Average Treatment Effect} (CATE)~\cite{abrevaya2015estimating}. 
It quantifies the expected difference in potential outcomes for a given covariate profile $x$, and serves as the primary estimand of interest in individual-level treatment effect estimation.

\subsection{Identifiability Assumptions}
CATE estimation from observational data requires a set of identifiability assumptions~\cite{Pearl2009CausalII}. 
These include the consistency assumption, where the observed outcome corresponds to the potential outcome under the treatment received and there is no interference between units, and the positivity assumption, which ensures that for all covariate values, the probability of receiving any treatment is strictly positive. 
This paper focuses on unconfoundedness assumption, which we formally define as follows:

\begin{assumption}[Unconfoundedness]
\label{assum:ori_uncon}
The treatment assignment is independent of the potential outcomes, conditional on the observed covariates, i.e., 
\begin{equation}
(Y^1, Y^0) \perp\!\!\!\perp T \mid X.
\end{equation}
\end{assumption}

When these assumptions hold, the CATE function in Eq.~(\ref{eq:cate}) becomes identifiable and is expressed as:
\begin{equation}
    \tau(x) = \mathbb{E}[Y \mid T=1, X=x] - \mathbb{E}[Y \mid T=0, X=x],
\end{equation}
which can be directly estimated from a finite observational dataset $\mathcal{D}$ by learning an estimator $\hat{\tau}(x)$, \emph{e.g.}, using CFRNet or TARNet~\cite{shalit2017estimating}, that approximates the true CATE function $\tau(x)$.

\textbf{Hidden Confounding Challenge}.
However, in practice, the assumption of unconfoundedness can be easily violated due to limited context~\cite{jesson2021quantifying}, where observed covariates $X$ do not capture all relevant confounders that jointly affect treatment and outcome. 
In particular, treatment assignment may depend on hidden variables $U$ that also influence the outcome $Y$. 
In such cases, the conditional independence assumption $(Y^1, Y^0) \perp\!\!\!\perp T \mid X$ is no longer valid, and is instead replaced by $(Y^1, Y^0) \not\!\perp\!\!\!\perp T \mid X$, but possibly $(Y^1, Y^0) \perp\!\!\!\perp T \mid (X, U)$ if $U$ were additionally observed.
This hidden confounding results in a systematic estimation bias: the learned estimator $\hat{\tau}(x)$, which assumes unconfoundedness given $X$, is biased with respect to the true CATE $\tau(x)$. 
\begin{figure*}[t]
\centering
\includegraphics[height=7.5cm]{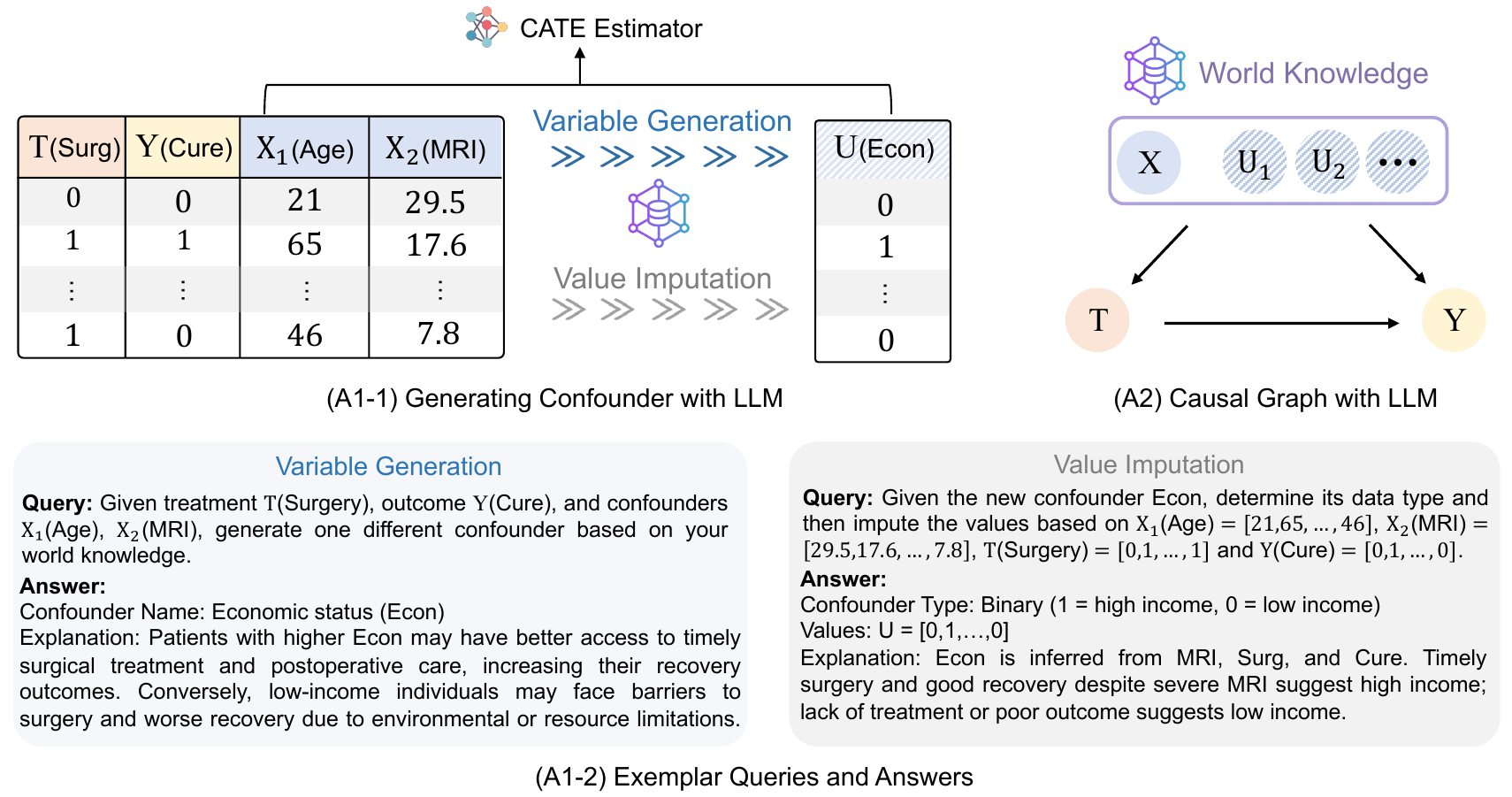}
\vspace{-0.4em}
\caption{Illustration of how LLMs help mitigate hidden confounding. 
\textbf{(A1-1)} The LLM takes structured data (treatment $T$, outcome $Y$, covariates $X$) to generate a missing confounder and impute its individual-level values for CATE estimation. 
\textbf{(A1-2)} Example prompts and responses show how LLMs leverage semantic and world knowledge in variable generation and value imputation. 
\textbf{(A2)} From a causal perspective, LLMs embed latent confounders within their world knowledge, enabling counterfactual reasoning without requiring explicit access to them.}\label{fig:motivation}
\vspace{-1.9em}
\end{figure*}

\section{The Proposed ProCI Framework}
\subsection{Motivation}
As aforementioned, hidden confounding is a fundamental challenge in observational study, as it can severely bias treatment effect estimates.
Although a range of methods have been proposed to mitigate this issue, including auxiliary variables, sensitivity analysis, and data from RCTs, these approaches either rely on untestable assumptions or are prohibitively expensive to apply in practice~\cite{Ananth2018HiddenBI}. 

Motivated by these limitations, we propose a new paradigm for addressing hidden confounding through LLMs. 
Leveraging their unique capabilities, this paper makes the first attempt to directly mitigate hidden confounding using LLMs, based on two key advantages:

$\bullet$ \textbf{Advantage 1: Semantic Utilization Beyond Tabular Data.} 
Traditional methods typically rely on structured tabular data and predefined variable sets, making it difficult to discover missing confounders that are not explicitly recorded. 
In contrast, as shown in Figure~\ref{fig:motivation} (A1-1) and (A1-2), LLMs can generate meaningful confounder candidates by interpreting the semantic relationships among treatment, outcome, and observed covariates. 
This demonstrates the LLM’s ability to reason about plausible latent variables using domain-level priors encoded in language.
Subsequently, by leveraging the existing tabular data, the LLM can perform unit-level confounder imputation based on its contextual inference capabilities to complete missing data.


$\bullet$ \textbf{Advantage 2: Implicit Confounding Awareness via World Knowledge.} 
Beyond explicit confounder generation, LLMs inherently encode rich world knowledge that implicitly captures causal dependencies and latent confounding factors. 
As illustrated in Figure~\ref{fig:motivation} (A2), this embedded knowledge allows LLMs to approximate the influence of hidden variables and thus support counterfactual reasoning without requiring direct access to all confounders. 
Such implicit awareness makes LLMs particularly suited for scenarios where collecting complete causal information is impractical, offering a scalable and assumption-light alternative to traditional approaches.
This advantage becomes increasingly valuable as the complexity and dimensionality of real-world data continue to grow.

\subsection{Overview of the ProCI Framework}
To elicit the above capabilities of LLMs for practical use, we propose ProCI, a framework for mitigating hidden confounding in observational data.
As illustrated in Figure~\ref{fig:framework}, ProCI consists of two iterative phases: \emph{confounder imputation} and \emph{unconfoundedness validation}.

\textbf{Phase 1: Confounder Imputation.} Given the observed dataset $\{X^{(0)}, T, Y\}$, where $X^{(0)}$ contains structured covariates, ProCI first prompts the LLM to generate a plausible confounder variable $\hat{U}$ based on the semantic relationships among $X^{(k)}$, $T$, and $Y$. The LLM also provides a textual explanation for the generated variable. Next, the framework imputes instance-level values for $\hat{U}$ by determining its distribution type and estimating the corresponding parameters using LLM-guided reasoning. The imputed confounder is then appended to the dataset to form $X^{(k+1)}$.

\textbf{Phase 2: Unconfoundedness Validation.} To assess whether the generated confounder $\hat{U}$ captures sufficient hidden bias, ProCI performs an imputation-based empirical unconfoundedness test.
Specifically, since the LLM encodes hidden confounding information within its world knowledge representation, ProCI leverages this capability to impute the counterfactual outcomes $\hat{Y}^0$ and $\hat{Y}^1$.
Then, a kernel-based conditional independence test (KCIT) is applied to check whether $(\hat{Y}^0, \hat{Y}^1) \perp\!\!\!\perp T \mid X^{(k+1)}$. If the test fails, ProCI returns to Phase 1 to generate an additional confounder. If the test passes, the framework proceeds to estimate the CATE using any standard estimator, such as TARNet.

By progressively generating and validating confounders, ProCI adaptively constructs a sufficient adjustment set without requiring access to ground-truth confounding variables, enabling more reliable treatment effect estimation in the presence of hidden confounding bias.

\begin{figure*}[t]
\centering
\includegraphics[height=5.7cm]{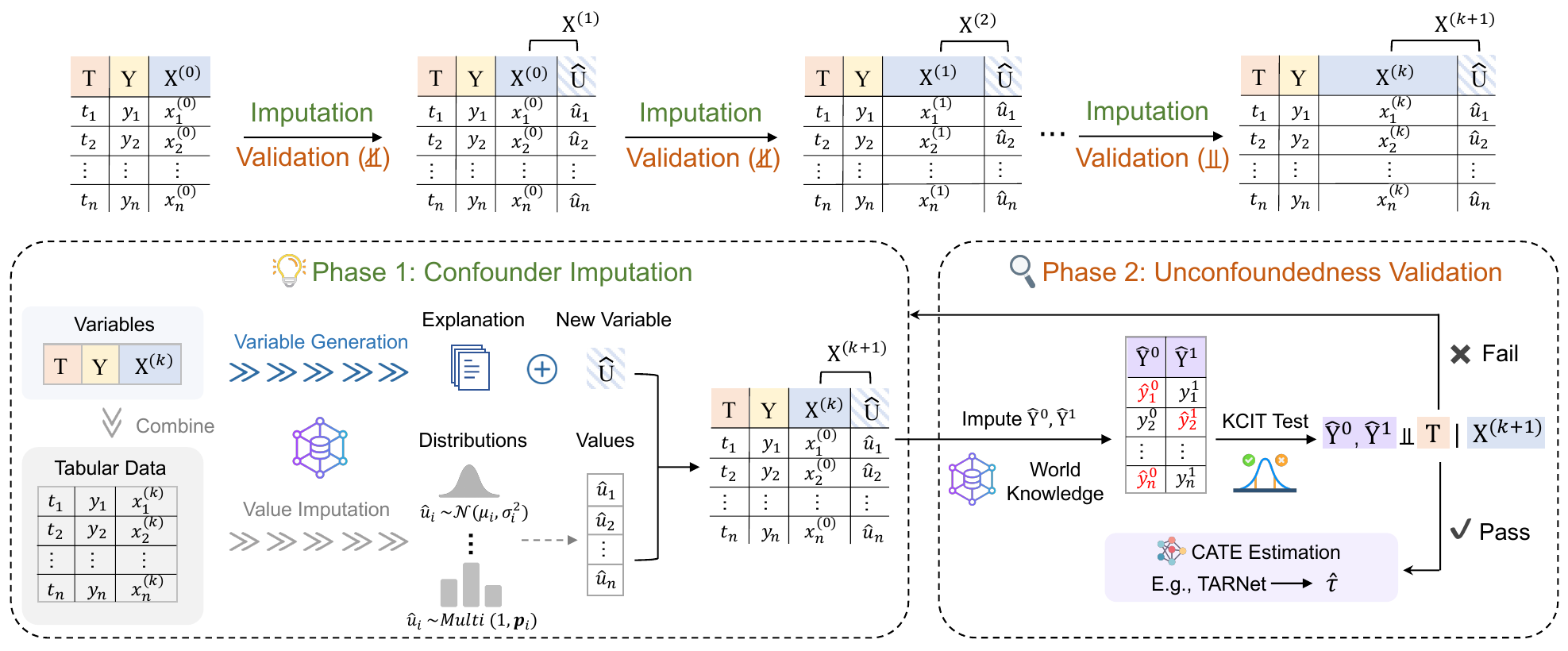}
\caption{Overview of the ProCI framework. ProCI alternates between two phases: (1) Confounder Imputation, where the LLM generates a new confounder and imputes its instance-level values based on current variables. (2) Unconfoundedness Validation, where counterfactual outcomes are imputed and a KCIT is applied to assess whether unconfoundedness holds. The process repeats until the test passes, after which any standard estimator can be used for final CATE estimation.}\label{fig:framework}
\vspace{-1em}
\end{figure*}
\subsection{Confounder Imputation via Prompting LLMs}

In this section, we propose to prompt LLMs to generate hidden confounders that are semantically grounded and causally relevant. 
Imputing a confounder involves two key steps: (1) generating a variable with a meaningful textual description, and (2) imputing its concrete values for each individual in the dataset. We describe these steps in detail below.

\textbf{Variable Generation.} 
Given a dataset $\mathcal{D}_{\text{obs}}$ consisting of observed variables $\{X, T, Y\}$, where $X$ denotes covariates, $T$ the treatment, and $Y$ the outcome, we first design a prompt function $\mathcal{P}_{\text{var}}$ to convert the textual descriptions of these variables into a natural language query.
We then use this prompt to query the LLM for a candidate confounder variable $\hat{U}$ that is semantically associated with both $T$ and $Y$, while being orthogonal to $X$:
\begin{equation}
	\hat{U}, \hat{U}_{\text{exp}} = \text{LLM}(\mathcal{P}_{\text{var}}(X, T, Y)),
\end{equation}
where $\hat{U}$ is the name or description of the generated confounder, and $\hat{U}_{\text{exp}}$ is the accompanying natural language explanation provided by the LLM. 
This explanation enhances the interpretability of the generation process and can serve as supporting evidence for the semantic plausibility of $\hat{U}$.

\textbf{Value Imputation.} 
After generating the confounder variable $\hat{U}$, we next impute its values for each individual to obtain an augmented dataset $\hat{\mathcal{D}}_{\text{obs}} = \{x_i, t_i, y_i, \hat{u}_i\}_{i=1}^N$.
Direct value imputation, i.e., prompting the LLM to generate $\hat{u}_i$ for each individual via a single-pass query, tends to produce collapsed outputs where many individuals receive similar or identical values. 
To address this issue, we decompose value imputation into distribution identification and parameter inference as below.

\textit{Distribution Identification.}  
Intuitively, by leveraging its embedded commonsense and world knowledge, the LLM can reliably identify the distribution type of a variable, for example, inferring that height follows a normal distribution, while birth month is approximately uniform in a community. 
Therefore, for the generated variable $\hat{U}$, we can confidently prompt the LLM to identify a reasonable distribution type $\mathcal{F} \in \mathbb{F} = \{\mathcal{F}_z\}_{z=1}^Z$:
\begin{equation}
	\mathcal{F} \sim \text{LLM}(\mathcal{P}_{\text{dist}}(X, T, Y, \hat{U})) \;\big|\; \mathcal{F} \in \mathbb{F}.
\end{equation}

In practice, the candidate distribution types $\mathbb{F}$ include, but are not limited to, Gaussian, Bernoulli, and categorical distributions.

\textit{Parameter Inference.}  
Once the distribution family $\mathcal{F}$ is selected, we prompt the LLM to infer individual-specific distribution parameters $\theta_i$ based on each individual's observed features. For example, $\theta_i = (\mu_i, \sigma_i)$ if $\mathcal{F}$ is a normal distribution. We then sample the imputed value $\hat{u}_i$ from the corresponding personalized distribution:
\begin{equation}
	\theta_i = \text{LLM}(\mathcal{P}_{\text{param}}(x_i, t_i, y_i)), \quad \hat{u}_i \sim \mathcal{F}(\theta_i).
\end{equation}

Compared to direct value imputation, this decomposition strategy helps reduce uncertainty and mitigates the collapse effect often observed in LLM outputs, thereby providing a more robust and controllable way for confounder value generation.


\subsection{Imputation-based Empirical Unconfoundedness Test}
While LLMs can help to impute the confounder, it is still challenging to test the validity of the generated confounder.
Intuitively, a crucial criterion is the unconfoundedness assumption (Assumption~\ref{assum:ori_uncon}): once conditioned on both the observed and generated confounders $X$ and $\hat{U}$, the treatment variable $T$ should be independent of the potential outcomes $Y^0$ and $Y^1$.
However, in observational datasets, we can only observe one of these two potential outcomes for each individual, making empirical testing of the unconfoundedness assumption seemingly infeasible.

To bridge this gap, we propose using LLMs to impute the missing outcomes $Y^0$ and $Y^1$ as follows:
\begin{equation}
\hat{y}_i^{1-t_i} = \text{LLM}(\mathcal{P}_{\text{out}}(x_i, u_i, t_i, y_i)).
\end{equation}
Then, we construct the variables $\hat{Y}^0$ and $\hat{Y}^1$, where for each individual $i$:
\begin{equation}
\hat{y}_i^0 =  (1 - t_i) \cdot y_i + t_i \cdot \hat{y}_i^{1 - t_i}, \quad
\hat{y}_i^1 = t_i \cdot y_i + (1 - t_i) \cdot \hat{y}_i^{1 - t_i}.
\end{equation}

\paragraph{Rationality Analysis.}
Since LLMs embed extensive world knowledge that potentially covers all relevant hidden confounders $U^*$, they can be viewed as approximately unbiased counterfactual estimators under the unconfoundedness assumption.
That is, when conditioned on the full context accessible to the LLM—including both structured and unstructured information—the potential outcomes $(Y^0, Y^1)$ become independent of the treatment $T$.
Formally, we posit that the LLM implicitly conditions on a latent variable $U^*$ such that $(Y^0, Y^1) \perp\!\!\!\perp T \mid X, U^*$, where $U^*$ denotes a sufficient set of hidden confounders encoded in the LLM’s prior knowledge.
This supports the use of LLMs for counterfactual prediction.

After obtaining the imputed outcomes $\hat{Y}^0$ and $\hat{Y}^1$, we can statistically test the unconfoundedness assumption using a non-parametric independence test.
Specifically, we adopt the Kernel-based Conditional Independence Test (KCIT), which is well-suited for high-dimensional conditioning sets~\cite{pogodin2024practicalkerneltestsconditional}.
KCIT evaluates whether the treatment $T$ is conditionally independent of the imputed outcomes $(\hat{Y}^0, \hat{Y}^1)$ given the observed and generated confounders $X$ and $\hat{U}$:
\begin{equation}
\label{eq:kcit}
    \mathbb{I}\left( \texttt{KCIT}\big((\hat{Y}^0, \hat{Y}^1), T \mid X, \hat{U}\big) > \alpha \right) = 1,
\end{equation}
where $\texttt{KCIT}(\cdot)$ returns a $p$-value, and $\alpha$ is a pre-defined significance level.
The indicator function $\mathbb{I}(\cdot)$ outputs 1 if the null hypothesis of conditional independence is not rejected, and 0 otherwise.

Based on the following theorem, we establishe the asymptotic validity of using imputed counterfactual outcomes and LLM-generated confounders in conditional independence testing.
\begin{theorem}
\label{thm:KCIT}
Under standard regularity conditions on the kernel function and the class of imputation distributions, the KCIT applied to imputed variables satisfies:
\begin{equation*}
    \texttt{KCIT}\big((\hat{Y}^0, \hat{Y}^1), T \mid X, \hat{U}\big) = \texttt{KCIT}\big((Y^0, Y^1), T \mid X, U\big) + o_p(1),
\end{equation*}
where $o_p(1)$ denotes a term that converges to zero in probability as the sample size increases. Please kindly refer to the appendix for a detailed proof of this theorem.
\end{theorem}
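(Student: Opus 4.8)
The plan is to prove the theorem by decomposing the difference between the two KCIT statistics into two sources of error and showing each vanishes in probability. The KCIT statistic is a continuous functional of the empirical joint distribution of the variables being tested (it is built from kernel Gram matrices and their eigenstructure, then aggregated into a test statistic via a trace or spectral quantity). So my first step is to write the statistic explicitly as $\texttt{KCIT} = \Psi_n(P_n)$, where $P_n$ is the empirical measure over the tuples $((\hat{Y}^0,\hat{Y}^1), T, X, \hat{U})$ and $\Psi_n$ is the statistic-generating functional; I would invoke the continuous-mapping / functional-delta-method machinery so that it suffices to control the discrepancy between the imputed empirical distribution and the oracle empirical distribution over $((Y^0,Y^1), T, X, U)$.

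Second, I would split the total error into (i) a \emph{confounder-approximation} term arising from replacing the true $U$ by the generated $\hat{U}$, and (ii) an \emph{outcome-imputation} term arising from replacing the unobserved potential outcome $Y^{1-t_i}$ by $\hat{y}_i^{1-t_i}$. For each term, the strategy is the same: bound the perturbation of each Gram matrix entry $k(z_i, z_j)$ using the regularity (Lipschitz / bounded, bounded-derivative) assumptions on the kernel, so that $\|\hat{K} - K\|$ (in operator or Frobenius norm) is controlled by the average imputation discrepancy. Concretely I would argue $\frac{1}{n}\sum_i \|\hat{u}_i - u_i\|^2 = o_p(1)$ and $\frac{1}{n}\sum_i (\hat{y}_i^{1-t_i} - y_i^{1-t_i})^2 = o_p(1)$ under the assumption that the imputation distributions are consistent, then propagate these through the kernel Lipschitz bound and through the stable (regularized) matrix operations defining the statistic. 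Since the statistic depends smoothly on the Gram matrices away from the regularization floor, perturbation bounds for the involved eigenvalues/traces give $\Psi_n(P_n^{\text{imp}}) - \Psi_n(P_n^{\text{oracle}}) = o_p(1)$.

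The main obstacle, and the step I would spend the most care on, is justifying that the potential-outcome imputation is consistent in the required $L^2$ sense rather than merely plausible. This rests on the ``Rationality Analysis'' posited earlier, namely that the LLM implicitly conditions on a sufficient latent set $U^*$ with $(Y^0,Y^1)\perp\!\!\!\perp T\mid X,U^*$, making $\hat{y}_i^{1-t_i}$ an (approximately) unbiased draw from the true counterfactual conditional. I would therefore state this as a formal regularity condition (bounded, consistent imputation with vanishing mean-squared error) and carry it as a hypothesis, since it cannot be derived from within the statistical model alone; the honest move is to expose it explicitly as the substantive assumption that drives the $o_p(1)$ outcome-imputation term. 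A secondary technical subtlety is handling the plug-in nature of the regularization and kernel-bandwidth choices so that the perturbation bounds remain uniform as $n\to\infty$; I would fix the bandwidth (or let it vary at a controlled rate) and use a standard Slutsky-type argument to combine the two $o_p(1)$ terms, yielding the claimed asymptotic equivalence.
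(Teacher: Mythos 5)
Your perturbation machinery matches the paper's proof in outline: both arguments decompose the difference of the two statistics into perturbations of the Gram/covariance terms entering $\hat{\Sigma}_{\bm{Y}T\mid Z} = \hat{\Sigma}_{\bm{Y}T} - \hat{\Sigma}_{\bm{Y}Z}(\hat{\Sigma}_{ZZ}+\gamma I)^{-1}\hat{\Sigma}_{ZT}$, handle the regularized inverse by a Neumann-type expansion $(\hat{\Sigma}+\mathcal{E})^{-1} = \hat{\Sigma}^{-1} - \hat{\Sigma}^{-1}\mathcal{E}\hat{\Sigma}^{-1} + O_p(\|\mathcal{E}\|)$, and conclude via the trace bound $\frac{1}{n}|\mathrm{Tr}(\cdot)| \le \|\cdot\|$. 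Where you genuinely diverge is in the hypothesis that drives the $o_p(1)$: you posit unit-level $L^2$ consistency, $\frac{1}{n}\sum_i \|\hat{u}_i - u_i\|^2 = o_p(1)$ (and likewise for the imputed counterfactuals), and push it through a fixed-bandwidth Lipschitz bound. The paper instead assumes agreement only at the level of \emph{expected} kernel matrices, $\|\mathbb{E}\tilde{K}_{\hat{\bm{Y}}} - \tilde{K}_{\bm{Y}}\| = o_p(1)$ and $\|\mathbb{E}\tilde{K}_{\hat{Z}} - \tilde{K}_{Z}\| = o_p(1)$, and controls the fluctuation $\|\tilde{K}_{\hat{\bm{Y}}} - \mathbb{E}\tilde{K}_{\hat{\bm{Y}}}\| = o_p(n)$ by a concentration inequality for Lipschitz kernels of LC-class random vectors (its Lemma~1, imported from Amini et al.) combined with a Gaussian RBF bandwidth $\sigma \to \infty$, so that the kernel's Lipschitz constant is $o(1)$.

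This difference is not cosmetic, and it is where your route has a genuine gap for the theorem as intended. In ProCI the imputed confounder $\hat{u}_i$ is a fresh draw from a personalized distribution $\mathcal{F}(\theta_i)$, and $u_i$ is never observed; even under a perfectly calibrated imputation model, $\hat{u}_i - u_i$ retains non-degenerate conditional variance, so your assumption $\frac{1}{n}\sum_i\|\hat{u}_i - u_i\|^2 = o_p(1)$ cannot hold --- that average converges to a strictly positive expected squared discrepancy. The same objection applies to the sampled counterfactual outcomes. What the ``Rationality Analysis'' actually licenses is agreement in \emph{distribution}, which is precisely what the paper's expected-Gram-matrix condition encodes; the sampling noise is then absorbed by the concentration-plus-vanishing-Lipschitz-constant argument rather than by pointwise convergence of imputed values to their unobserved targets. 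So your decomposition, Neumann step, and trace propagation are all sound and coincide with the paper's, but you should recast the substantive hypothesis at the distribution (expected kernel) level and replace the fixed-bandwidth Slutsky step with a bandwidth regime under which the Gram-matrix fluctuations are $o_p(n)$; as written, the $L^2$-consistency premise that your two $o_p(1)$ error terms rest on is unobtainable in the setting the theorem is meant to cover.
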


\subsection{Progressive Confounder Imputation Procedure}
Up to present, we have introduced how to impute a single confounder and test its validity. 
However, in real-world scenarios, hidden confounding often arises from multiple factors and may involve a large number of latent variables. 
For example, treatment assignment in healthcare may be influenced by a combination of demographic factors, socioeconomic status, and behavioral traits~\cite{prosperi2020causal,grootendorst2007review}, many of which are difficult to observe in practice.

A naive approach is to generate multiple confounders simultaneously, but this often results in redundancy or semantic overlap among the generated variables. 
To address this, we propose the complete ProCI framework, which incrementally generates confounders in a stepwise manner based on previously generated variables, ensuring diversity and sufficiency.

Formally, let $X^{(0)} = X$ denote the initial set of observed covariates. 
At each iteration $k$, we query the LLM using the current context $(X^{(k-1)}, T, Y)$ to produce a new confounder $\hat{U}$, and update the augmented confounder set as:
\begin{equation}
    \hat{U},\hat{U}_{\text{exp}} = \text{LLM}(\mathcal{P}_{\text{var}}(X^{(k-1)}, T, Y)), \quad X^{(k)} = [X^{(k-1)}, \hat{U}].
\end{equation}

After each generation step, we apply the empirical unconfoundedness test described in Eq.~(\ref{eq:kcit}) to assess whether the updated set $X^{(k)}$ sufficiently captures all relevant confounding information. 
The process terminates at the smallest iteration $k$ such that $X^{(k)}$ passes the test:
\begin{equation}
\min \left\{ k \,\middle|\, \texttt{KCIT}\big((\hat{Y}^0, \hat{Y}^1), T \mid X^{(k)}\big) > \alpha \right\}.
\end{equation}
That is, ProCI terminates precisely when the current set of confounders becomes sufficient for unbiased treatment effect estimation, avoiding both redundancy and under-adjustment.

\begin{table}[t]
    \centering
    \caption{Overall comparison of treatment effect estimation performance on the Jobs and Twins datasets. For each base model, the best results across methods are highlighted.}
    \LARGE
    \resizebox{\textwidth}{!}{
        \renewcommand{\arraystretch}{1.3} 
    \begin{tabular}{l|cccc|cccc}
    \hline \hline
        Datasets&\multicolumn{4}{c|}{Jobs}&\multicolumn{4}{c}{Twins} \\ \hline
        Test Types&\multicolumn{2}{c}{In-sample}&\multicolumn{2}{c|}{Out-sample}&\multicolumn{2}{c}{In-sample}&\multicolumn{2}{c}{Out-sample} \\ \hline
        Methods & $\epsilon_{ATT}$ & $\mathcal{R}_{pol}$  & $\epsilon_{ATT}$ & $\mathcal{R}_{pol}$ & $\epsilon_{ATE}$ & $\epsilon_{PEHE}$& $\epsilon_{ATE}$ & $\epsilon_{PEHE}$ \\ \hline
        \textbf{S-Learner} & 0.0491$_{\pm \text{0.0011}}$ & 0.2289$_{\pm \text{0.0008}}$ & 0.0876$_{\pm \text{0.0018}}$ & 0.1678$_{\pm \text{0.0002}}$ & 0.0131$_{\pm \text{0.0020}}$ & 0.2527$_{\pm \text{0.0012}}$ & 0.0037$_{\pm \text{0.0024}}$ & 0.2819$_{\pm \text{0.0001}}$ \\ 
        +ProCI-4o & 0.0793$_{\pm \text{0.0043}}$ & \cellcolor[HTML]{F7F1B8}0.2288$_{\pm \text{0.0001}}$ & 0.0833$_{\pm \text{0.0001}}$ & \cellcolor[HTML]{F7F1B8}0.1667$_{\pm \text{0.0014}}$ & \cellcolor[HTML]{F7F1B8}0.0057$_{\pm \text{0.0000}}$ & \cellcolor[HTML]{F7F1B8}0.2524$_{\pm \text{0.0001}}$ & 0.0077$_{\pm \text{0.0000}}$ & 0.2867$_{\pm \text{0.0003}}$ \\ 
        +ProCI-R1 & \cellcolor[HTML]{F7F1B8}0.0216$_{\pm \text{0.0002}}$ & 0.2324$_{\pm \text{0.0000}}$ & \cellcolor[HTML]{F7F1B8}0.0712$_{\pm \text{0.0001}}$ & 0.1745$_{\pm \text{0.0002}}$ & 0.0066$_{\pm \text{0.0008}}$ & 0.2596$_{\pm \text{0.0001}}$ & \cellcolor[HTML]{F7F1B8}0.0028$_{\pm \text{0.0009}}$ & \cellcolor[HTML]{F7F1B8}0.2817$_{\pm \text{0.0004}}$ \\ \hline
        \textbf{PSM} & 0.6197$_{\pm \text{0.0000}}$ & 0.2707$_{\pm \text{0.0000}}$ & 0.1259$_{\pm \text{0.0015}}$ & 0.2192$_{\pm \text{0.0013}}$ & 0.0457$_{\pm \text{0.0006}}$ & 0.3399$_{\pm \text{0.0007}}$ & 0.0840$_{\pm \text{0.0000}}$ & 0.4027$_{\pm \text{0.0001}}$ \\ 
        +ProCI-4o & \cellcolor[HTML]{F7D7E1}0.6149$_{\pm \text{0.0001}}$ & 0.2691$_{\pm \text{0.0017}}$ & \cellcolor[HTML]{F7D7E1}0.1125$_{\pm \text{0.0032}}$ & 0.2219$_{\pm \text{0.0002}}$ & \cellcolor[HTML]{F7D7E1}0.0454$_{\pm \text{0.0051}}$ & \cellcolor[HTML]{F7D7E1}0.3396$_{\pm \text{0.0004}}$ & \cellcolor[HTML]{F7D7E1}0.0825$_{\pm \text{0.0021}}$ & \cellcolor[HTML]{F7D7E1}0.4002$_{\pm \text{0.0011}}$ \\ 
        +ProCI-R1 & 0.6245$_{\pm \text{0.0000}}$ & \cellcolor[HTML]{F7D7E1}0.2633$_{\pm \text{0.0000}}$ & 0.1292$_{\pm \text{0.0041}}$ & \cellcolor[HTML]{F7D7E1}0.2163$_{\pm \text{0.0131}}$ & 0.0454$_{\pm \text{0.0040}}$ & 0.3399$_{\pm \text{0.0027}}$ & 0.0849$_{\pm \text{0.0009}}$ & 0.4033$_{\pm \text{0.0125}}$ \\ \hline
        \textbf{TARNet} & 0.0191$_{\pm \text{0.0002}}$ & 0.2177$_{\pm \text{0.0001}}$ & 0.1466$_{\pm \text{0.0026}}$ & 0.2201$_{\pm \text{0.0002}}$ & 0.0233$_{\pm \text{0.0044}}$ & 0.2917$_{\pm \text{0.0001}}$ & 0.0310$_{\pm \text{0.0005}}$ & 0.3237$_{\pm \text{0.0001}}$ \\ 
        +ProCI-4o & 0.0424$_{\pm \text{0.0023}}$ & \cellcolor[HTML]{D9E6F2}0.2167$_{\pm \text{0.0022}}$ & 0.3223$_{\pm \text{0.0098}}$ & 0.2150$_{\pm \text{0.0202}}$ & \cellcolor[HTML]{D9E6F2}0.0144$_{\pm \text{0.0218}}$ & \cellcolor[HTML]{D9E6F2}0.2729$_{\pm \text{0.0017}}$ & \cellcolor[HTML]{D9E6F2}0.0185$_{\pm \text{0.0001}}$ & \cellcolor[HTML]{D9E6F2}0.3107$_{\pm \text{0.0206}}$ \\ 
        +ProCI-R1 & \cellcolor[HTML]{D9E6F2}0.0131$_{\pm \text{0.0001}}$ & 0.2266$_{\pm \text{0.0000}}$ & \cellcolor[HTML]{D9E6F2}0.0656$_{\pm \text{0.0024}}$ & \cellcolor[HTML]{D9E6F2}0.2111$_{\pm \text{0.0004}}$ & 0.0159$_{\pm \text{0.0001}}$ & 0.2844$_{\pm \text{0.0009}}$ & 0.0243$_{\pm \text{0.0001}}$ & 0.3172$_{\pm \text{0.0001}}$ \\ \hline
        \textbf{CFR-Wass} & 0.0355$_{\pm \text{0.0006}}$ & 0.2150$_{\pm \text{0.0001}}$ & 0.1487$_{\pm \text{0.0028}}$ & 0.2191$_{\pm \text{0.0004}}$ & 0.0189$_{\pm \text{0.0000}}$ & 0.2818$_{\pm \text{0.0000}}$ & 0.0186$_{\pm \text{0.0002}}$ & 0.3138$_{\pm \text{0.0000}}$ \\ 
        +ProCI-4o & \cellcolor[HTML]{E9F4E4}0.0300$_{\pm \text{0.0009}}$ & \cellcolor[HTML]{E9F4E4}0.2085$_{\pm \text{0.0001}}$ & \cellcolor[HTML]{E9F4E4}0.0402$_{\pm \text{0.0003}}$ & 0.2151$_{\pm \text{0.0005}}$ & \cellcolor[HTML]{E9F4E4}0.0094$_{\pm \text{0.0001}}$ & \cellcolor[HTML]{E9F4E4}0.2729$_{\pm \text{0.0001}}$ & \cellcolor[HTML]{E9F4E4}0.0178$_{\pm \text{0.0001}}$ & \cellcolor[HTML]{E9F4E4}0.3110$_{\pm \text{0.0001}}$ \\ 
        +ProCI-R1 & 0.0303$_{\pm \text{0.0010}}$ & 0.2264$_{\pm \text{0.0042}}$ & 0.1141$_{\pm \text{0.0067}}$ & \cellcolor[HTML]{E9F4E4}0.2088$_{\pm \text{0.0004}}$ & 0.0215$_{\pm \text{0.0002}}$ & 0.2796$_{\pm \text{0.0061}}$ & 0.0282$_{\pm \text{0.0003}}$ & 0.3119$_{\pm \text{0.0032}}$ \\ \hline
        \textbf{ESCFR} & 0.0543$_{\pm \text{0.0012}}$ & 0.2184$_{\pm \text{0.0001}}$ & 0.2245$_{\pm \text{0.0390}}$ & 0.2274$_{\pm \text{0.0002}}$ & 0.0199$_{\pm \text{0.0001}}$ & 0.2715$_{\pm \text{0.0001}}$ & 0.0207$_{\pm \text{0.0003}}$ & 0.3059$_{\pm \text{0.0007}}$ \\ 
        +ProCI-4o & 0.0369$_{\pm \text{0.0004}}$ & \cellcolor[HTML]{EEEBD6}0.2174$_{\pm \text{0.0001}}$ & 0.3225$_{\pm \text{0.0622}}$ & \cellcolor[HTML]{EEEBD6}0.1891$_{\pm \text{0.0002}}$ & 0.0191$_{\pm \text{0.0001}}$ & 0.2700$_{\pm \text{0.0032}}$ & 0.0297$_{\pm \text{0.0029}}$ & 0.3045$_{\pm \text{0.0001}}$ \\ 
        +ProCI-R1 & \cellcolor[HTML]{EEEBD6}0.0130$_{\pm \text{0.0001}}$ & 0.2219$_{\pm \text{0.0000}}$ & \cellcolor[HTML]{EEEBD6}0.1294$_{\pm \text{0.0072}}$ & 0.2101$_{\pm \text{0.0008}}$ & \cellcolor[HTML]{EEEBD6}0.0087$_{\pm \text{0.0001}}$ & \cellcolor[HTML]{EEEBD6}0.2684$_{\pm \text{0.0002}}$ & \cellcolor[HTML]{EEEBD6}0.0132$_{\pm \text{0.0001}}$ & \cellcolor[HTML]{EEEBD6}0.3038$_{\pm \text{0.0000}}$ \\ \hline \hline
    \end{tabular}
    }
    \vspace{-0.5em}
    \label{tab:main}
\end{table}
\section{Experiments}
In this section, we conduct experiments to evaluate the effectiveness of the ProCI framework, guided by the following research questions:  
\textbf{[RQ1]} Can imputed confounders from different LLMs improve treatment effect estimation?  
\textbf{[RQ2]} Do the imputed confounders capture additional confounding information beyond observed covariates?  
\textbf{[RQ3]} Is ProCI robust to varying levels of hidden confounding?  
\textbf{[RQ4]} What is the contribution of each ProCI component?

\subsection{Experimental Setup}

\textbf{Datasets.} We evaluate on two commonly-used causal benchmarks: \textbf{Twins}~\cite{almond2005costs} and \textbf{Jobs}~\cite{lalonde1986evaluating}.  
\textbf{Twins} contains 8,244 twin pairs from U.S. birth records, with treatment indicating the heavier twin and outcome measuring one-year mortality. The dataset includes 50 demographic and birth-related covariates. 
Selection bias is introduced following~\cite{louizos2017causal}.  
\textbf{Jobs} studies the effect of a job training program on employment status. It includes 297 treated individuals, 425 randomized controls, and 2,490 observational controls, with 7 covariates describing demographic and economic features.

\textbf{Base Models.} ProCI augments observed data with generated confounders and thus is model-agnostic. We evaluate it with:  
(i) meta-learners (\textbf{S-Learner}~\cite{kunzel2019metalearners});  
(ii) matching-based methods (\textbf{PSM}~\cite{caliendo2008some}); and  
(iii) representation learning models (\textbf{TARNet}, \textbf{CFR-Wass}~\cite{shalit2017estimating}, and \textbf{ESCFR}~\cite{wang2023optimal}).  
For confounder generation, we use two advanced LLMs: \textbf{GPT-4o}~\cite{hurst2024gpt} and \textbf{DeepSeek-R1}~\cite{guo2025deepseek}.

\textbf{Training \& Evaluation Protocols.} We use grid search for hyperparameter tuning on validation sets, and adopt consistent settings across shared base models. LLM temperature is fixed at 0.7. All models are implemented in PyTorch 1.10 and trained with Adam optimizer.
For both datasets, we split the data into training, validation, and test sets (63:27:10 for Twins, 56:24:20 for Jobs).  
Twins is evaluated using $\epsilon_{PEHE}$ and $\epsilon_{ATE}$~\cite{hill2011bayesian}, while Jobs uses $\epsilon_{ATT}$ and policy risk ($\mathcal{R}_{pol}$)~\cite{shalit2017estimating} for evaluation.
Please kindly refer to the appendix for more experimental details.

\subsection{[RQ1] CATE Performance with Imputed Confounders}
To answer RQ1, we apply our proposed ProCI framework using two LLMs, GPT-4o and DeepSeek-R1, denoted as ProCI-4o and ProCI-R1 respectively. We evaluate both variants across a diverse set of base CATE estimators. Each experiment is repeated five times to ensure stability.

\textbf{Results.} 
As shown in Table~\ref{tab:main},  we can see:
$i)$ ProCI-augmented estimators consistently outperform base models in both in-sample and out-of-sample settings, demonstrating the effectiveness of our imputed confounders.  
$ii)$ All base model categories benefit from ProCI, with representation-based methods gaining the most, likely due to improved latent variable balancing.  
$iii)$ ProCI-R1 generally outperforms ProCI-4o, indicating that DeepSeek-R1’s stronger reasoning leads to more accurate confounder generation and better bias correction.

\begin{figure*}[t]
    \setlength{\abovecaptionskip}{0.2cm}
    \setlength{\fboxrule}{0.pt}
    \setlength{\fboxsep}{0.pt}
    \centering

    \begin{minipage}[t]{0.32\textwidth}
        \subfigure{
            \includegraphics[width=\textwidth]{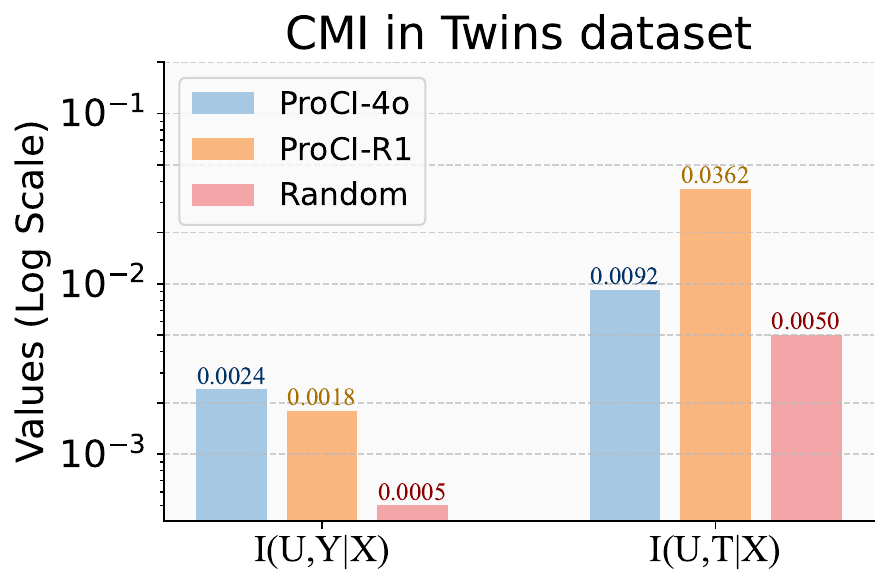}
        }
        \vspace{0.2cm} 
        \subfigure{
            \includegraphics[width=\textwidth]{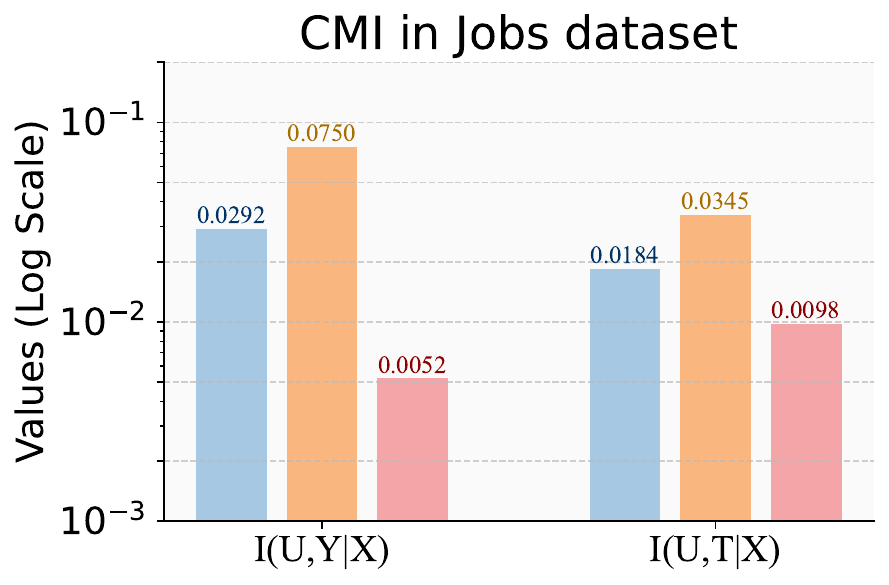}
        }
    \end{minipage}
    \hspace{0.02\textwidth} 
    \begin{minipage}[t]{0.6\textwidth}
        \subfigure{
            \includegraphics[width=\textwidth]{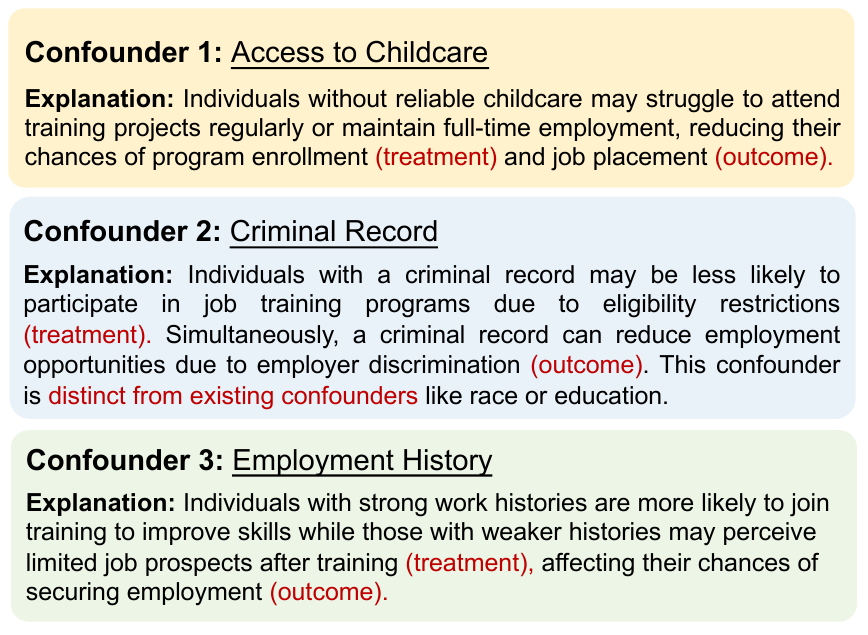}
        }
    \end{minipage}
    \vspace{-1em}
\caption{\textbf{(Left)} CMI measures dependence between LLM-generated confounders and treatment/outcome, compared to random confounders.  
\textbf{(Right)} Confounders generated in Jobs demonstrate their semantic relevance and impact on treatment/outcome.}
\label{fig:info}
\vspace{-1.5em}
\end{figure*}
\begin{figure}[t]
	\setlength{\abovecaptionskip}{0.2cm}
	\setlength{\fboxrule}{0.pt}
	\setlength{\fboxsep}{0.pt}
	\centering
	\subfigure{
		\includegraphics[width=0.25\textwidth]{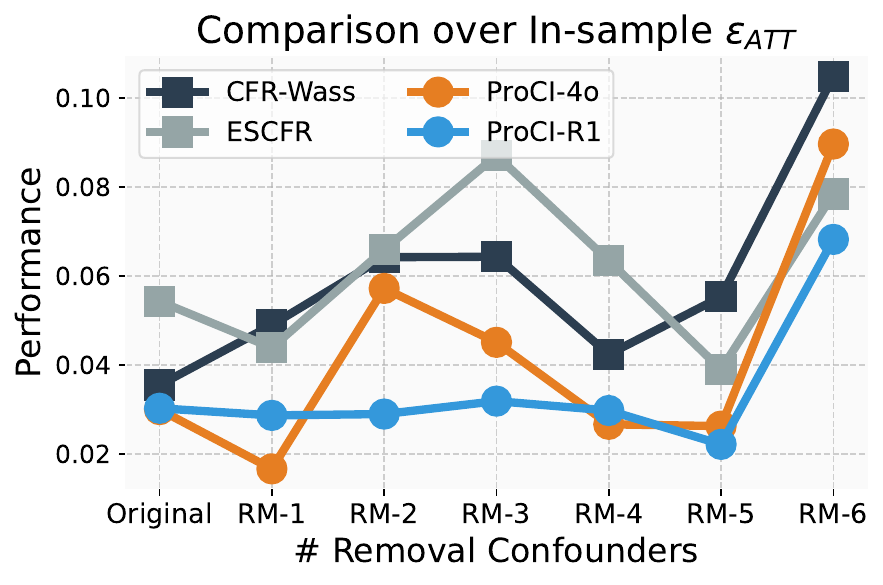}	
	}\hspace{-1em}
	\subfigure{
		\includegraphics[width=0.25\textwidth]{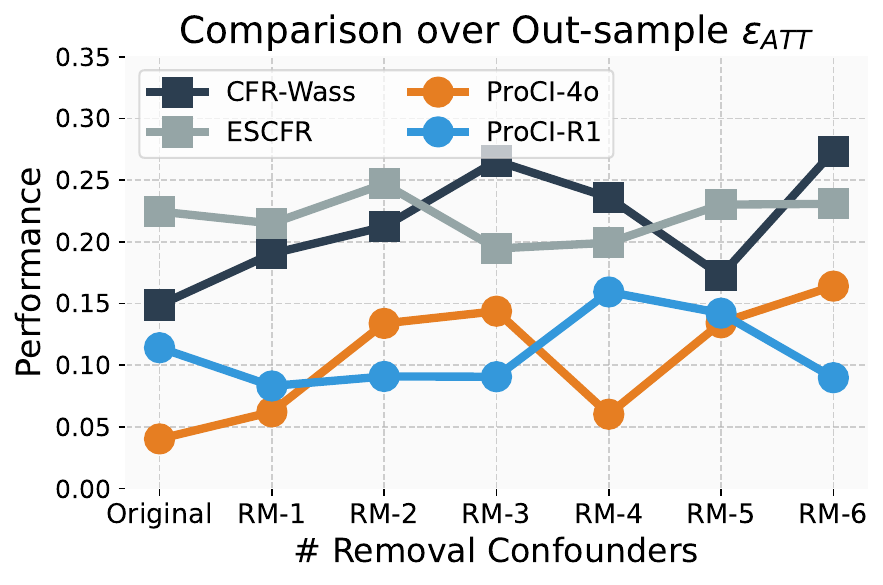}
	}\hspace{-1em}
	\subfigure{
		\includegraphics[width=0.25\textwidth]{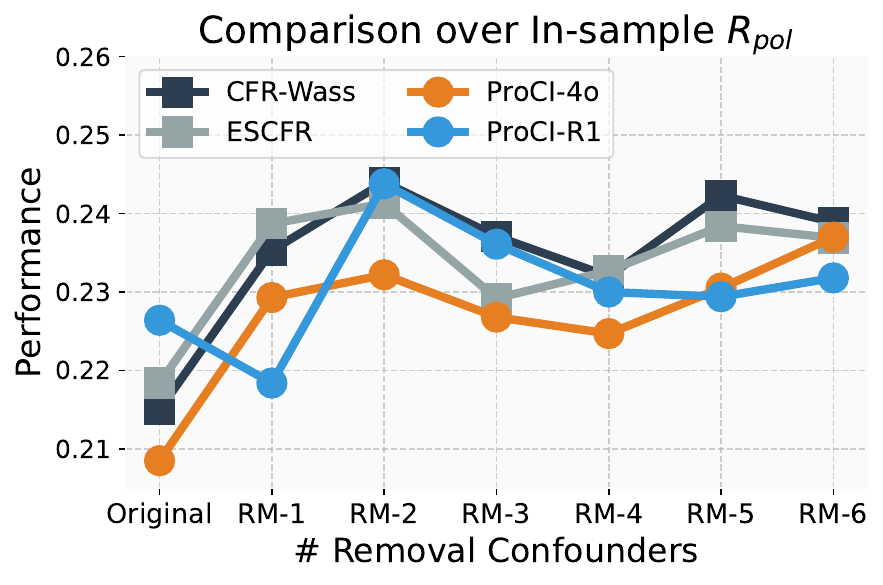}	
	}\hspace{-1em}
	\subfigure{
		\includegraphics[width=0.25\textwidth]{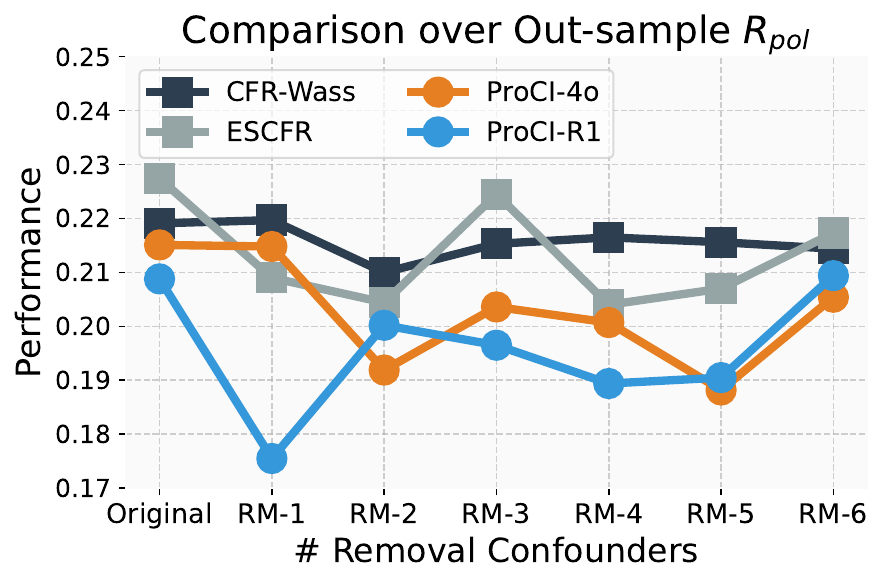}	
	}
	\vspace{-0cm}
	\caption{Robustness of ProCI in CATE estimation with varying hidden confounder removal.}\label{fig:robust}
	\vspace{-0.5cm}
\end{figure}
\subsection{[RQ2] Confounding Information in Imputed Confounders}
To answer RQ2, we evaluate whether the confounders imputed by ProCI contain additional information beyond observed covariates. We use conditional mutual information (CMI) to measure the dependency of imputed confounders $U$ with treatment $T$ and outcome $Y$ given observed variables $X$, i.e., $I(U,T|X)$ and $I(U,Y|X)$. Higher CMI values indicate more relevant confounding information.

\textbf{Results.} Figure~\ref{fig:info} (left) compares imputed confounders from ProCI variants using GPT-4o and DeepSeek-R1 against random confounders generation baseline. 
The notably higher CMI values confirm that ProCI’s confounders carry significant additional information about treatment and outcome beyond observed covariates. 
In addition, Figure~\ref{fig:info} (right) shows three example confounders generated by DeepSeek-R1 on Jobs.
These confounders have plausible links to treatment and outcomes and are distinct from observed covariates, highlighting ProCI’s ability to uncover meaningful latent confounders omitted by traditional data.

\subsection{[RQ3] Robustness of ProCI to Hidden Confounding}
In this section, we evaluate the robustness of CATE estimators under hidden confounders. 
Since hidden confounders are unobservable, we simulate their effects by progressively removing $\{0,1,2,3,4,5,6\}$ confounders from the original dataset.
This allows us to mimic the impact of latent confounders in a controlled manner.
We experiment on Jobs and select CFR-Wass as base model.

\textbf{Results.} As shown in Figure~\ref{fig:robust}, when the number of removed confounders increases, the performance of vanilla CATE estimators CFR-Wass and ESCFR degrades significantly across all metrics.
This is expected, since these models are vulnerable to hidden confounder bias, leading to distorted CATE estimations. In contrast, our proposed ProCI framework maintains stable CATE estimation performance despite the progressive removal of confounders from the observed dataset. This robustness is attributed to ProCI's progressive confounder generation capability, which introduces novel and informative confounders to counteract the sparsity induced by the removal of observed confounders.

\subsection{[RQ4] Ablation Studies of ProCI Components}
\begin{wraptable}{r}{7.0cm}
\vspace{-0.5cm}
\caption{Results of ablation studies.}
\label{tab:ablation}
\resizebox{\linewidth}{!}{
\renewcommand{\arraystretch}{1.15} 
\begin{tabular}{l|cccc}
\hline \hline
&\multicolumn{2}{c}{In-sample}&\multicolumn{2}{c}{Out-sample} \\ \hline
Methods & $\epsilon_{ATT}$ & $\mathcal{R}_{pol}$  & $\epsilon_{ATT}$ & $\mathcal{R}_{pol}$  \\ \hline
w/o DR & 0.0327 & 0.2178 & 0.1599 & 0.2229 \\ 
w/o PI & 0.0308 & 0.2223 & 0.1020 & 0.2374 \\
w/o UT&  0.0315 & 0.2218 & 0.0890 & 0.2270 \\ 
ProCI & \textbf{0.0300} & \textbf{0.2085} & \textbf{0.0402} & \textbf{0.2151} \\  \hline \hline
\end{tabular}}
\vspace{-0.2cm}
\end{wraptable}

In this experiment, we conduct ablation studies to evaluate ProCI components. 
In specific, we tailor three variants: $i)$ \underline{ProCI w/o DR}, which removes distributional reasoning; $ii)$ \underline{ProCI w/o PI}, which eliminates the progressive manner in generating confounders; and $iii)$ \underline{ProCI w/o UT}, which omits the unconfoundedness test. Experiments are conducted on the Jobs dataset using GPT-4o as the LLM backbone.

\textbf{Results.} Table~\ref{tab:ablation} shows that removing any component from ProCI reduces performance. Specifically, removing distributional reasoning (w/o DR) leads to less robust estimations due to the corrupted values often generated by LLMs. 
Removing the progressive imputation process (w/o PI), where all confounders are generated at once, would inevitably introduces semantic overlap. 
Finally, omitting the unconfoundedness test (w/o UT) can include irrelevant confounders, further degrading the performance of ProCI.

\section{Related Works}
\textbf{Treatment Effect Estimation with Hidden Confounding.}  
To mitigate hidden confounding, existing approaches generally fall into three categories. Sensitivity analysis methods~\cite{Rosenbaum1983AssessingST,Robins2000TwoPO} assess the potential effect of hidden confounders by deriving bounds on treatment effect estimates, though they rely on fixed, unverifiable assumptions~\cite{Franks2018FlexibleSA,Veitch2020SenseAS}. Auxiliary variable techniques, including instrumental variables and front-door adjustments~\cite{Li2023RemovingHC,Fulcher2017RobustIO,Shah2023FrontdoorAB}, exploit external or intermediate variables to achieve unbiased estimates, but these depend on unverifiable structural assumptions~\cite{Imbens2014InstrumentalVA,Wu2022InstrumentalVR,Bellemare2024ThePO}. 
Another line of work integrates RCTs with observational data~\cite{Kallus2018RemovingHC,Hatt2022CombiningOA,Wu2022IntegrativeRO} to correct for hidden biases, but their applicability is often constrained by the high costs and limited availability of RCT data.

\textbf{LLMs for treatment effect estimation}.
LLMs have recently been explored for causal inference tasks~\cite{liu2024discoveryhiddenworldlarge,zhao2024causal,chen2024quantifying,lin2024optimizing,wei2022chain,tan2023causal,chen2024causal,lin2023text}, especially in estimating treatment effects through prompt-based reasoning~\cite{abdali2023extracting, jin2023cladder, pawlowski2023answering, dhawan2024end,imai2024causal}. 
These methods typically focus on extracting causal structures from text or guiding LLMs to simulate interventional queries via carefully crafted prompts. 
For example, recent works use self-consistency~\cite{abdali2023extracting}, tool augmentation~\cite{pawlowski2023answering}, and chain-of-thought prompting~\cite{jin2023cladder} to improve causal variable identification and treatment effect estimation. 
~\cite{dhawan2024end} further automates treatment effect estimation by combining LLM outputs with traditional treatment effect estimators. 

\section{Conclusion}
In this work, we make the first attempt to leverage LLMs to mitigate hidden confounding in treatment effect estimation. 
We propose ProCI, a framework that progressively elicits LLMs to generate, impute, and validate hidden confounders using both structured and unstructured information.
By incorporating distribution-aware imputation and an empirical unconfoundedness test grounded in the world knowledge embedded in LLMs, ProCI provides a robust and scalable solution for treatment effect estimation under hidden confounding.
Extensive experiments across diverse datasets and LLMs demonstrate the effectiveness of our approach. This work offers a new perspective on using LLMs as knowledge-rich tools for treatment effect estimation under hidden confounding.

\bibliographystyle{plain}
\bibliography{arxiv}

\newpage
\appendix

\section{In-Depth Causal Inference Preliminaries}
This section provides additional background on causal inference, which is particularly intended to assist readers who may be less familiar with the foundational concepts or technical aspects of treatment effect estimation from observational data.

We begin by introducing key definitions and assumptions underlying causal inference from observational data. For an individual characterized by covariates \( x \), there are two potential outcomes: \( Y^1 \) if the individual receives treatment and \( Y^0 \) if assigned to control. The Conditional Average Treatment Effect (CATE) captures the expected difference in outcomes between these two scenarios:

\begin{definition}
The CATE for individuals with covariates \( x \) is defined as
\begin{equation}
    \tau(x) = \mathbb{E}[Y^1 - Y^0 \mid X = x],
\end{equation}
where \( X \) denotes the covariate variable, and \( Y^1 \), \( Y^0 \) are the potential outcomes under treatment and control, respectively.
\end{definition}

Estimating CATE from observational data poses two primary challenges:
\begin{enumerate}
    \item \textit{Missing counterfactuals}: For each individual, we only observe the outcome corresponding to the assigned treatment. The unobserved counterfactual remains inaccessible.
    \item \textit{Selection bias}: Treatment assignment may depend on covariates related to the outcome, leading to systematic differences between treated and control groups.
\end{enumerate}

To address these challenges, \cite{pearl2018book} proposed a two-stage framework. The first stage, \textit{identification}, aims to express causal quantities, such as \(\tau(x)\), in terms of observed data using assumptions and adjustment formulas. Identification is not always guaranteed, and depends on the following assumptions:

\begin{assumption}[Ignorability / Unconfoundedness]
\label{assume:ignore}
	The treatment assignment is independent of the potential outcomes given the covariates, i.e., \( Y^1, Y^0 \perp \! \! \! \perp T \mid X = x \).
\end{assumption}

\begin{assumption}[Positivity]
\label{assume:pos}
	For any value \( x \), both treatment and control must have non-zero probability, i.e., \( 0 < P(T = t \mid X = x) < 1 \) for all \( t \in \{0, 1\} \).
\end{assumption}

\begin{assumption}[SUTVA]
\label{assume:sutva}
	The potential outcomes for any individual are unaffected by others' treatment assignments, and each treatment corresponds to a single well-defined outcome.
\end{assumption}

\begin{assumption}[Consistency]
\label{assume:cons}
	The observed outcome equals the potential outcome under the treatment actually received.
\end{assumption}

Once identification is established, the second stage, \textit{estimation}, transforms the causal estimand into a statistical estimand that can be computed from the data:
\begin{equation}
\label{eq:estimand}
\begin{aligned}
\mathbb{E}[Y^1 - Y^0 \mid X = x] &= \mathbb{E}[Y^1 \mid X = x] - \mathbb{E}[Y^0 \mid X = x] \\
&\overset{(1)}{=} \mathbb{E}[Y^1 \mid X = x, T = 1] - \mathbb{E}[Y^0 \mid X = x, T = 0] \\
&\overset{(2)}{=} \mathbb{E}[Y \mid X = x, T = 1] - \mathbb{E}[Y \mid X = x, T = 0],
\end{aligned}
\end{equation}
where step (1) uses Assumption~\ref{assume:ignore}, and step (2) additionally relies on Assumptions~\ref{assume:pos}, \ref{assume:sutva}, and \ref{assume:cons}.

In practice, numerous approaches have been developed to estimate the last quantity. 
Classical methods include matching strategies~\cite{caliendo2008some}, which pair treated and control units with similar covariates, and meta-learners~\cite{kunzel2019metalearners}, which adapt supervised learning techniques to causal inference tasks. More recent advances leverage deep learning, notably representation learning methods~\cite{assaad2021counterfactual, wu2023stable}, which aim to mitigate selection bias by learning balanced latent spaces. A prominent example is Counterfactual Regression (CFR)~\cite{shalit2017estimating}, which introduces regularization terms based on distributional distances—such as the Wasserstein distance or the maximum mean discrepancy—to align representations between treatment groups.

Despite their success, these approaches heavily rely on the unconfoundedness assumption, which is often violated in real-world datasets where hidden confounding factors may influence both treatment and outcome. While some recent methods attempt to account for hidden confounding, they typically depend on strong structural assumptions, additional proxies, or access to experimental (RCT) data—which may be costly or unavailable in practice.

In this paper, we take a new direction by exploring the potential of large language models (LLMs) to assist in imputing latent confounders from observational text and tabular data. 
Specifically, we propose a framework ProCI that leverages LLMs’ implicit knowledge and generative ability to infer proxy variables that encode hidden causal information—thus helping to relax the unconfoundedness assumption and improve the robustness of causal estimates in the presence of hidden confounding.

\section{Proof of Theorem~1}
In Theorem~1, we aim to employ the kernel-based conditional independence test (KCIT) to assess whether the potential outcomes are conditionally independent of the treatment variable, given both observed and latent covariates. Specifically, we test whether:
\[
\bm{Y} = (Y^0, Y^1) \perp\!\!\!\perp T \mid X, U
\]

\paragraph{Hypotheses.}
\begin{itemize}
  \item \textbf{Null Hypothesis} ($H_0$): $\bm{Y} \! \perp \!\!\! \perp T \mid X, U$ — the potential outcomes are conditionally independent of treatment given covariates.
  \item \textbf{Alternative Hypothesis} ($H_1$): $\bm{Y} \not \! \perp \!\!\! \perp T \mid X, U$ — there exists residual dependence between treatment and outcomes after conditioning on covariates.
\end{itemize}

\paragraph{Test Statistic.}

KCIT estimates the squared Hilbert-Schmidt norm of the partial cross-covariance operator between $\bm{Y}=(Y^0,Y^1)$ and $T$ given $(X, U)$. Let $n$ be the sample size. For two random vectors $X,Y\in\mathcal{X}\times\mathcal{Y}$, define the cross-variance operator as $\langle g,\Sigma_{YX}f\rangle=\mathbb{E}_{XY}(f(X)g(Y))-\mathbb{E}_Xf(X)\mathbb{E}_Yg(Y)$ for $f\in\mathcal{H}_X$ and $g\in\mathcal{H}_Y$, the RKHS of $\mathcal{X}$ and $\mathcal{Y}$ respectively. The cross-variance operator is estimated via $\hat{\Sigma}_{YX}=\frac{1}{n}Tr(\Tilde{K}_X\Tilde{K}_Y)$, with $\Tilde{K}_X=HK_XH$, $H=I-\frac{1}{n}\bm{1}\bm{1}^\top$ and the (i,j)-th entry of $K_X$ is $k(x_i,x_j).$ The KCIT operator is estimated via:
\begin{equation}
\label{eq:cond_dep_mat}
    \hat{\Sigma}_{\bm{Y}T\mid (X,U)} = \hat{\Sigma}_{\bm{Y}T} - \hat{\Sigma}_{\bm{Y}Z} (\hat{\Sigma}_{ZZ} + \gamma I)^{-1} \hat{\Sigma}_{ZT}
\end{equation}
with $Z = (X, U)$, and $\gamma > 0$ is a regularization parameter. Finally, the KCIT test statistics is constructed by
\begin{equation}
\label{eq:kcit}
\texttt{KCIT}(\bm{Y},T\mid(X,U)) = \frac{1}{n} Tr(\hat{\Sigma}_{\bm{Y}T \mid (X,U)}).
\end{equation}

\begin{lemma}[Theorem 1 in~\citep{amini2021concentration}]
\label{lem:amini}
Let $X_i\in\mathrm{LC}(\mu_i,\Sigma_i,\omega), i = 1,\ldots,n$, be a collection of independent random vectors from the LC distribution class defined in \cite{amini2021concentration}, and let $K = K(X)$ be the kernel matrix for an $L$-Lipschitz kernel function $k(x_1,x_2)$, i.e. $|k(x_1,x_2)-k(y_1,y_2)|\le L(\|x_1-y_1\|+\|x_2-y_2\|).$ Then, for some universal constant $c>0$, with probability at least $1-\exp(-ct^{2})$,
\[
\|K - \mathbb{E}K\|\leq2L\omega\sigma_{\infty}(Cn+\sqrt{n}t),
\]
where $\sigma_{\infty}^{2}:=\max_{i}\|\Sigma_{i}\|$ and $C = c^{-1/2}$. 
\end{lemma}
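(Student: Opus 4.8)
The plan is to decompose the deviation into a fluctuation term, controlled by the Lipschitz-concentration property that defines the LC class, and an expectation term $\mathbb{E}\|K-\mathbb{E}K\|$, controlled by a crude Frobenius-norm estimate; this mirrors the structure of the cited argument. Writing $F(\bar x) := \|K(\bar x) - \mathbb{E}K\|$ as a function of the concatenated data $\bar x = (x_1,\dots,x_n)\in\mathbb{R}^{nd}$, the target follows by combining
\[
F(\bar X) \le \mathbb{E}F(\bar X) + \big(F(\bar X) - \mathbb{E}F(\bar X)\big)
\]
with a high-probability bound on the fluctuation and a deterministic bound on $\mathbb{E}F$.

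First I would show that $F$ is Lipschitz in the Euclidean norm on $\mathbb{R}^{nd}$ with constant $2L\sqrt{n}$. By the reverse triangle inequality $|F(\bar x)-F(\bar x')| \le \|K(\bar x)-K(\bar x')\|$, and since the operator norm is dominated by the Frobenius norm it suffices to bound $\|K(\bar x)-K(\bar x')\|_F$. The $L$-Lipschitz property of the kernel gives entrywise $|k(x_i,x_j)-k(x_i',x_j')| \le L(\|x_i-x_i'\|+\|x_j-x_j'\|)$; squaring, summing over $i,j$, and using $(a+b)^2\le 2a^2+2b^2$ yields $\|K(\bar x)-K(\bar x')\|_F^2 \le 4L^2 n\sum_i\|x_i-x_i'\|^2$, i.e. the stated constant. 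I would then invoke the concentration property of the LC class, which for independent blocks tensorizes so that any $\ell$-Lipschitz function $g$ of $\bar X$ obeys $\mathbb{P}(|g(\bar X)-\mathbb{E}g|\ge s) \le 2\exp(-cs^2/(\ell^2\omega^2\sigma_\infty^2))$, the variance proxy being the maximum block covariance $\sigma_\infty^2=\max_i\|\Sigma_i\|$. Applying this with $g=F$, $\ell=2L\sqrt{n}$, and $s=2L\omega\sigma_\infty\sqrt{n}\,t$ makes the exponent equal to $ct^2$ and delivers $F(\bar X)\le \mathbb{E}F + 2L\omega\sigma_\infty\sqrt{n}\,t$ with probability at least $1-\exp(-ct^2)$.

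It remains to bound the expectation by the leading term $2L\omega\sigma_\infty C n$. Here I would use Jensen together with the Frobenius domination, $\mathbb{E}F = \mathbb{E}\|K-\mathbb{E}K\| \le \mathbb{E}\|K-\mathbb{E}K\|_F \le \big(\sum_{i,j}\mathrm{Var}(k(X_i,X_j))\big)^{1/2}$. For fixed $j$ the map $x\mapsto k(x,x_j)$ is $L$-Lipschitz, so integrating the LC sub-Gaussian tail gives $\mathrm{Var}(k(X_i,X_j)) \le c'' L^2\omega^2\sigma_\infty^2$ for a universal constant $c''$, and symmetrically in the $X_j$ direction. Summing the $n^2$ entries yields $\mathbb{E}F \le C n L\omega\sigma_\infty$ with $C=c^{-1/2}$ after matching constants. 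Combining with the fluctuation bound produces $\|K-\mathbb{E}K\| \le 2L\omega\sigma_\infty(Cn+\sqrt{n}\,t)$ with probability at least $1-\exp(-ct^2)$, as claimed.

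The main obstacle I anticipate is twofold. First, tracking the exact $\sqrt{n}$ scaling of the Lipschitz constant through the Frobenius bound: a careless entrywise estimate would inflate it by a further $\sqrt{n}$ and break the advertised $\sqrt{n}\,t$ fluctuation rate. Second, justifying that the LC concentration tensorizes across the $n$ independent blocks with the \emph{maximum} covariance $\sigma_\infty^2$ rather than an accumulating sum; this dimension-free tensorization is precisely the defining feature of the LC class (inherited from a log-Sobolev or transport-type inequality) and is what renders the bound nontrivial. By contrast, the expectation bound is deliberately crude—Frobenius dominating operator norm can cost up to a $\sqrt{n}$ factor—but this is harmless since the target leading term is already at the $O(n)$ scale.
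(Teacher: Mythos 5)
The paper contains no proof of this lemma to compare against: it is quoted verbatim as Theorem~1 of \cite{amini2021concentration}, and the appendix simply invokes it as an imported result. Your reconstruction is correct and is essentially the argument of that cited source: the reverse triangle inequality plus Frobenius domination makes $\bar x \mapsto \|K(\bar x)-\mathbb{E}K\|$ Lipschitz with constant $2L\sqrt{n}$ (your bookkeeping that avoids the spurious extra $\sqrt{n}$ is the right one), composing with the affine maps $\Sigma_i^{1/2}$ contributes the factor $\sigma_\infty$, LC concentration then delivers the $2L\omega\sigma_\infty\sqrt{n}\,t$ fluctuation term, and the Jensen--Frobenius variance bound on the entrywise sub-Gaussian kernel evaluations (including the $2L$-Lipschitz diagonal map $x\mapsto k(x,x)$) produces the $O(n)$ mean term with exactly the advertised $C=c^{-1/2}$ after integrating the tails. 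The one hypothesis you rightly isolate---dimension-free tensorization of Lipschitz concentration across the $n$ independent blocks with variance proxy $\sigma_\infty^2$ rather than an accumulating sum---is indeed not automatic for an arbitrary law with sub-Gaussian Lipschitz concentration, but it is built into membership in the LC class as defined in \cite{amini2021concentration} (e.g., inherited from log-Sobolev or transport inequalities, as for strongly log-concave distributions under affine maps), so appealing to it is legitimate under the lemma's stated hypotheses and does not constitute a gap.
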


\begin{lemma}
\label{lem:matrix_approx}
    Let $\hat{\bm{Y}}_i,\hat{Z}_i$ be samples from LC classes $\mathrm{LC}(\mu_{\bm{Y}},\Sigma_{Yi},\omega)$ and $\mathrm{LC}(\mu_Z,\Sigma_{Zi},\omega)$ respectively\footnote{Here $\hat{Z}=(X,\hat{U})$, where $\hat{U}$ is the estimation of latent confounders.}, such that $\|\mathbb{E}\Tilde{K}_{\hat{\bm{Y}}}-\Tilde{K}_{\bm{Y}}\|=o_p(1)$ and $\|\mathbb{E}\Tilde{K}_{\hat{Z}}-\Tilde{K}_{Z}\|=o_p(1)$. Then, for Gaussian RBF such that $\sigma\rightarrow\infty$ as $n\rightarrow\infty,$ we have 
    \begin{equation}
    \label{eq:matrix_approx_eqs}
        \begin{aligned}
        &\|\hat{\Sigma}_{\hat{\bm{Y}}T}-\hat{\Sigma}_{\bm{Y}T}\|=o_p(1)\\
        &\|\hat{\Sigma}_{\hat{\bm{Y}}\hat{Z}}-\hat{\Sigma}_{\bm{Y}Z}\|=o_p(1)\\
        &\|\hat{\Sigma}_{\hat{Z}T}-\hat{\Sigma}_{ZT}\|=o_p(1)\\
        &\|\hat{\Sigma}_{\hat{Z}\hat{Z}}-\hat{\Sigma}_{ZZ}\|=o_p(1)\\
    \end{aligned}
    \end{equation}
    with the last equation implies $\|\hat{\Sigma}_{\hat{Z}\hat{Z}}^{-1}-\hat{\Sigma}_{ZZ}^{-1}\|=o_p(1).$
\end{lemma}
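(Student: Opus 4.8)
The plan is to reduce all four operator approximations in \eqref{eq:matrix_approx_eqs} to a single pair of centered kernel-matrix approximations, namely $\tfrac1n\|\tilde K_{\hat{\bm Y}}-\tilde K_{\bm Y}\|=o_p(1)$ and $\tfrac1n\|\tilde K_{\hat Z}-\tilde K_Z\|=o_p(1)$, and then propagate these through the (bi)linear and resolvent structure of the estimators. The guiding observation is that each empirical cross-covariance operator $\hat\Sigma_{AB}$ is a $\tfrac1n$-normalized bilinear functional of the centered feature maps, so its value is continuous in the centered Gram matrices $\tilde K_A=HK_AH$; since $T$ is observed exactly, only the $\bm Y$- and $Z$-slots ever change between the hatted and unhatted operators.

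First I would establish the kernel-matrix approximation by splitting into a bias term and a fluctuation term:
\begin{equation*}
\tilde K_{\hat{\bm Y}}-\tilde K_{\bm Y}=\big(\tilde K_{\hat{\bm Y}}-\mathbb E\tilde K_{\hat{\bm Y}}\big)+\big(\mathbb E\tilde K_{\hat{\bm Y}}-\tilde K_{\bm Y}\big).
\end{equation*}
The second (bias) term is $o_p(1)$ by the hypothesis of the lemma, which encodes that the LLM imputation is asymptotically unbiased at the kernel level. For the first (fluctuation) term, centering is norm-nonincreasing, $\|\tilde K_{\hat{\bm Y}}-\mathbb E\tilde K_{\hat{\bm Y}}\|=\|H(K_{\hat{\bm Y}}-\mathbb E K_{\hat{\bm Y}})H\|\le\|K_{\hat{\bm Y}}-\mathbb E K_{\hat{\bm Y}}\|$, so Lemma~\ref{lem:amini} applies to the $\mathrm{LC}$-class samples $\hat{\bm Y}_i$ and gives, with high probability, $\|K_{\hat{\bm Y}}-\mathbb E K_{\hat{\bm Y}}\|\le 2L\omega\sigma_\infty(Cn+\sqrt n\,t)$. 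Dividing by $n$ leaves a bound of order $L\omega\sigma_\infty(C+t/\sqrt n)$; here the Gaussian RBF assumption $\sigma\to\infty$ is exactly what forces the Lipschitz constant $L\asymp 1/\sigma\to0$, which dominates the leading $Cn$ growth after the $\tfrac1n$ normalization and drives the fluctuation term to zero. The same argument applied to $\hat Z_i=(X,\hat U_i)$ yields $\tfrac1n\|\tilde K_{\hat Z}-\tilde K_Z\|=o_p(1)$.

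Next I would propagate these two approximations to the four operators. Because the RBF kernel is bounded by $1$, the normalized Gram matrices satisfy $\tfrac1n\|\tilde K_A\|\le 1$ uniformly, so all factors appearing in $\hat\Sigma_{AB}$ are $O_p(1)$. Writing each operator difference via the telescoping identity $\hat A\hat B-AB=(\hat A-A)\hat B+A(\hat B-B)$ and bounding each summand by a kernel-matrix difference times a bounded factor shows that $\|\hat\Sigma_{\hat{\bm Y}T}-\hat\Sigma_{\bm Y T}\|$, $\|\hat\Sigma_{\hat{\bm Y}\hat Z}-\hat\Sigma_{\bm Y Z}\|$, $\|\hat\Sigma_{\hat Z T}-\hat\Sigma_{Z T}\|$, and $\|\hat\Sigma_{\hat Z\hat Z}-\hat\Sigma_{ZZ}\|$ are all $o_p(1)$. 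For the inverse claim I would invoke the resolvent identity on the regularized operators,
\begin{equation*}
(\hat\Sigma_{\hat Z\hat Z}+\gamma I)^{-1}-(\hat\Sigma_{ZZ}+\gamma I)^{-1}=-(\hat\Sigma_{\hat Z\hat Z}+\gamma I)^{-1}(\hat\Sigma_{\hat Z\hat Z}-\hat\Sigma_{ZZ})(\hat\Sigma_{ZZ}+\gamma I)^{-1},
\end{equation*}
and use that both resolvents have norm at most $1/\gamma$ because $\hat\Sigma_{ZZ},\hat\Sigma_{\hat Z\hat Z}\succeq 0$ and $\gamma>0$ is fixed; this bounds the left side by $\gamma^{-2}\|\hat\Sigma_{\hat Z\hat Z}-\hat\Sigma_{ZZ}\|=o_p(1)$.

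I expect the main obstacle to be the fluctuation step, i.e. coordinating the rate at which $\sigma=\sigma(n)\to\infty$ with the sample size so that $L(\sigma)\,n\to0$ while retaining enough kernel expressiveness for the hypotheses of Lemma~\ref{lem:amini} to remain meaningful. Two further technical points require care: verifying that the imputed samples genuinely lie in the $\mathrm{LC}$ class with a common $\omega$ and with $\sigma_\infty^2=\max_i\|\Sigma_i\|$ bounded, so that the bias hypothesis and the concentration bound can be combined on the same probability space; and confirming that the centering operator $H$ interacts correctly with the expectation, which holds since $H$ is deterministic and idempotent, giving $\mathbb E\tilde K=H(\mathbb E K)H$.
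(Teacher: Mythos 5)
Your proposal follows essentially the same route as the paper's proof: the same bias--fluctuation decomposition of $\tilde{K}_{\hat{\bm Y}}-\tilde{K}_{\bm Y}$, the same application of Lemma~\ref{lem:amini} with $L=o(\sigma^{-1})=o(1)$ to make the concentration term $o_p(n)$ before the $\tfrac{1}{n}$ normalization, and the same telescoping identity for the two-slot differences such as $\hat{\Sigma}_{\hat{\bm Y}\hat{Z}}-\hat{\Sigma}_{\bm Y Z}$. Your only deviation is the inverse step, where you apply the resolvent identity to the $\gamma$-regularized operators with the uniform $1/\gamma$ bound; this is, if anything, a slight tightening of the paper's first-order expansion $(\hat{\Sigma}_{\hat{Z}\hat{Z}}+\mathcal{E})^{-1}=\hat{\Sigma}_{\hat{Z}\hat{Z}}^{-1}-\hat{\Sigma}_{\hat{Z}\hat{Z}}^{-1}\mathcal{E}\hat{\Sigma}_{\hat{Z}\hat{Z}}^{-1}+O_p(\|\mathcal{E}\|)$, since the regularized resolvents are the ones that actually appear in the KCIT statistic and are uniformly bounded without further assumptions.
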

\begin{proof}[Proof of Lemma \ref{lem:matrix_approx}]
    From Lemma \ref{lem:amini}, with probability at least $1-\exp(-ct^{2})$,
\[
\|K_{\hat{\bm{Y}}} - \mathbb{E}K_{\hat{\bm{Y}}}\|\leq2L\omega\sigma_{\infty}(Cn+\sqrt{n}t).
\] Therefore, since $L=o(\sigma^{-1})=o(1)$ as $n$ tends to infinity, $\|\Tilde{K}_{\hat{\bm{Y}}} - \mathbb{E}\Tilde{K}_{\hat{\bm{Y}}}\|\le\|H\|^2\|K_{\hat{\bm{Y}}} - \mathbb{E}K_{\hat{\bm{Y}}}\|=o_p(n).$ From above we have
\[
\begin{aligned}
    \|\hat{\Sigma}_{\hat{\bm{Y}}T}-\hat{\Sigma}_{\bm{Y}T}\|&=\frac{1}{n}Tr(\Tilde{K}_X(\Tilde{K}_{\hat{\bm{Y}}}-\Tilde{K}_{\bm{Y}}))\\
    &\le \frac{1}{n}\|\Tilde{K}_X\|\cdot\|\Tilde{K}_{\hat{\bm{Y}}}-\Tilde{K}_{\bm{Y}}\|\\
    &\le \frac{1}{n}\|\Tilde{K}_X\|\cdot\big(\|\Tilde{K}_{\hat{\bm{Y}}}-\mathbb{E}\Tilde{K}_{\hat{\bm{Y}}}\|+\|\mathbb{E}\Tilde{K}_{\hat{\bm{Y}}}-\Tilde{K}_{\bm{Y}}\|\big)\\
    &=o_p(1),\\
\end{aligned}
\] which proves the first equation in Eq. (\ref{eq:matrix_approx_eqs}). The second equation comes from the fact that
\[
\begin{aligned}
    \|\hat{\Sigma}_{\hat{\bm{Y}}\hat{Z}}-\hat{\Sigma}_{\bm{Y}Z}\|&=\frac{1}{n}Tr(\Tilde{K}_{\hat{\bm{Y}}}(\Tilde{K}_{\hat{Z}}-\Tilde{K}_{Z})+\Tilde{K}_Z(\Tilde{K}_{\hat{\bm{Y}}}-\Tilde{K}_{\bm{Y}}))\\
    &\le \frac{1}{n}\Big(\|\Tilde{K}_{\hat{\bm{Y}}}\|\cdot\|\Tilde{K}_{\hat{Z}}-\Tilde{K}_{Z}\|+\|\Tilde{K}_Z\|\cdot\|\Tilde{K}_{\hat{\bm{Y}}}-\Tilde{K}_{\bm{Y}}\|\Big)\\
    &=o_p(1).\\
\end{aligned}
\] with the last equation resulting from $\|\Tilde{K}_{\hat{\bm{Y}}} - \Tilde{K}_{\bm{Y}}\|=o_p(n)$ and $\|\Tilde{K}_{\hat{Z}} - \Tilde{K}_{Z}\|=o_p(n).$ The third equation in Eq. (\ref{eq:matrix_approx_eqs}) comes from the same deduction as for the first equation, and the last equation comes from the same deduction for the second equation.

Finally, the conclusion on the inverse matrix is straightforward observing that $$(\hat{\Sigma}_{\hat{Z}\hat{Z}}+\mathcal{E})^{-1}=\hat{\Sigma}_{\hat{Z}\hat{Z}}^{-1}-\hat{\Sigma}_{\hat{Z}\hat{Z}}^{-1}\mathcal{E}\hat{\Sigma}_{\hat{Z}\hat{Z}}^{-1}+O_p(\|\mathcal{E}\|)=\hat{\Sigma}_{ZZ}^{-1}+o_p(1)$$ with $\|\mathcal{E}\|=o_p(1).$
\end{proof}
\begin{theorem}
\label{thm:KCIT}
Under standard regularity conditions on the kernel function and the class of imputation distributions, the KCIT applied to imputed variables satisfies:
\begin{equation*}
    \texttt{KCIT}\big((\hat{Y}^0, \hat{Y}^1), T \mid X, \hat{U}\big) = \texttt{KCIT}\big((Y^0, Y^1), T \mid X, U\big) + o_p(1),
\end{equation*}
where $o_p(1)$ denotes a term that converges to zero in probability as the sample size increases.
\end{theorem}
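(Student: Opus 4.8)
The plan is to deduce Theorem~\ref{thm:KCIT} from Lemma~\ref{lem:matrix_approx} by a perturbation (continuous-mapping) argument: the KCIT statistic is assembled from the empirical operators $\hat{\Sigma}_{\bm{Y}T}$, $\hat{\Sigma}_{\bm{Y}Z}$, $(\hat{\Sigma}_{ZZ}+\gamma I)^{-1}$ and $\hat{\Sigma}_{ZT}$ through multiplication, inversion, and a normalized trace, and Lemma~\ref{lem:matrix_approx} already shows that each such block computed from the imputed variables $(\hat{\bm{Y}},\hat{Z})$ lies within $o_p(1)$ of its counterpart built from $(\bm{Y},Z)$. It therefore suffices to propagate these $o_p(1)$ errors through the algebra and then through the trace. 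First I would write the difference of the two statistics as
\begin{equation*}
\texttt{KCIT}(\hat{\bm{Y}},T\mid(X,\hat{U})) - \texttt{KCIT}(\bm{Y},T\mid(X,U)) = \frac{1}{n}\,\mathrm{Tr}\!\left(\hat{\Sigma}_{\hat{\bm{Y}}T\mid(X,\hat{U})} - \hat{\Sigma}_{\bm{Y}T\mid(X,U)}\right),
\end{equation*}
so the whole task reduces to controlling $\Delta := \hat{\Sigma}_{\hat{\bm{Y}}T\mid(X,\hat{U})} - \hat{\Sigma}_{\bm{Y}T\mid(X,U)}$.

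For the operator-level bound I would split $\Delta$ according to Eq.~(\ref{eq:cond_dep_mat}) into the marginal term $\hat{\Sigma}_{\hat{\bm{Y}}T}-\hat{\Sigma}_{\bm{Y}T}$, which is $o_p(1)$ directly by the first line of Lemma~\ref{lem:matrix_approx}, and the regression term
\begin{equation*}
\hat{\Sigma}_{\hat{\bm{Y}}\hat{Z}}(\hat{\Sigma}_{\hat{Z}\hat{Z}}+\gamma I)^{-1}\hat{\Sigma}_{\hat{Z}T} - \hat{\Sigma}_{\bm{Y}Z}(\hat{\Sigma}_{ZZ}+\gamma I)^{-1}\hat{\Sigma}_{ZT}.
\end{equation*}
The latter I would handle with the telescoping identity $ABC-A'B'C' = (A-A')BC + A'(B-B')C + A'B'(C-C')$, so that each summand is a product of one $o_p(1)$ factor (supplied by Lemma~\ref{lem:matrix_approx}, including its inverse-stability conclusion $\|(\hat{\Sigma}_{\hat{Z}\hat{Z}}+\gamma I)^{-1}-(\hat{\Sigma}_{ZZ}+\gamma I)^{-1}\| = o_p(1)$) and $O_p(1)$ factors. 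Boundedness of the remaining factors is exactly where the $\gamma I$ regularization is essential: it guarantees $\|(\hat{\Sigma}_{ZZ}+\gamma I)^{-1}\|\le 1/\gamma$ and $\|(\hat{\Sigma}_{\hat{Z}\hat{Z}}+\gamma I)^{-1}\|\le 1/\gamma$, while the bounded-kernel assumption keeps $\hat{\Sigma}_{\bm{Y}Z}$, $\hat{\Sigma}_{\hat{Z}T}$ and the like uniformly bounded. Summing three $o_p(1)$ terms then yields $\|\Delta\| = o_p(1)$.

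Finally I would pass from the operator norm to the scalar statistic. Since the empirical operators act on the at-most-$n$-dimensional span of the feature maps, $\Delta$ has rank at most $n$, so $|\mathrm{Tr}(\Delta)|\le n\|\Delta\|$ (equivalently, one bounds the trace of the product of centered kernel matrices by a Cauchy--Schwarz / trace-norm inequality together with the growing-bandwidth normalization, exactly as in the proof of Lemma~\ref{lem:matrix_approx}), whence $\tfrac{1}{n}|\mathrm{Tr}(\Delta)| \le \|\Delta\| = o_p(1)$, which is the claim. The main obstacle is not this algebra but verifying the \emph{hypotheses} of Lemma~\ref{lem:matrix_approx}: that the imputation distributions satisfy $\|\mathbb{E}\tilde{K}_{\hat{\bm{Y}}}-\tilde{K}_{\bm{Y}}\| = o_p(1)$ and $\|\mathbb{E}\tilde{K}_{\hat{Z}}-\tilde{K}_{Z}\| = o_p(1)$, in the vanishing-Lipschitz / growing-bandwidth regime $\sigma\to\infty$ demanded by Lemma~\ref{lem:amini}. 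These conditions encode the statistical quality of the LLM imputation---essentially that the imputed counterfactuals and confounder are asymptotically unbiased in the kernel-mean sense underlying the rationality analysis---and making them precise and defensible, rather than merely assumed, is the genuinely delicate part of the argument.
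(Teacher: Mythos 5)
Your proposal is correct and follows essentially the same route as the paper's proof: reduce the theorem to Lemma~\ref{lem:matrix_approx}, propagate the $o_p(1)$ perturbations of $\hat{\Sigma}_{\bm{Y}T}$, $\hat{\Sigma}_{\bm{Y}Z}$, $(\hat{\Sigma}_{ZZ}+\gamma I)^{-1}$, $\hat{\Sigma}_{ZT}$ through the partial cross-covariance formula in Eq.~(\ref{eq:cond_dep_mat}), and conclude via $\tfrac{1}{n}|\mathrm{Tr}(\Delta)|\le\|\Delta\|$. You merely make explicit two steps the paper leaves implicit (the telescoping product decomposition and the $1/\gamma$ bound on the regularized inverse), and you rightly identify the kernel-mean closeness hypotheses of Lemma~\ref{lem:matrix_approx} as the substantive assumption on the imputation quality, which the paper likewise assumes rather than verifies.
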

\paragraph{Proof of Theorem~1.} Based on Lemma \ref{lem:matrix_approx} and Eq. (\ref{eq:cond_dep_mat}), we have
\[
\begin{aligned}
    \hat{\Sigma}_{\hat{\bm{Y}}T\mid \hat{Z}} &= \hat{\Sigma}_{\hat{\bm{Y}}T} - \hat{\Sigma}_{\hat{\bm{Y}}\hat{Z}} (\hat{\Sigma}_{\hat{Z}\hat{Z}} + \gamma I)^{-1} \hat{\Sigma}_{\hat{Z}T}\\
    &=\hat{\Sigma}_{\bm{Y}T} - \hat{\Sigma}_{\bm{Y}Z} (\hat{\Sigma}_{ZZ} + \gamma I)^{-1} \hat{\Sigma}_{ZT}+o_p(1)\\
    &=\hat{\Sigma}_{\bm{Y}T\mid Z}+o_p(1).\\
\end{aligned}
\] Therefore, based on Eq.~(\ref{eq:kcit}), we have 
\[
\begin{aligned}
    &|\texttt{KCIT}\big((Y^0, Y^1), T \mid X, U\big)-\texttt{KCIT}\big((\hat{Y}^0, \hat{Y}^1), T \mid X, \hat{U}\big)|\\
    =&\frac{1}{n}|Tr(\hat{\Sigma}_{\hat{\bm{Y}}T\mid \hat{Z}}-\hat{\Sigma}_{\bm{Y}T\mid Z})|\\
    \le&\|\hat{\Sigma}_{\hat{\bm{Y}}T\mid \hat{Z}}-\hat{\Sigma}_{\bm{Y}T\mid Z}\|\\
    =&o_p(1),
\end{aligned}
\] which proves the theorem.

\section{Further Experimental Details}
\subsection{Dataset}

\textbf{Twins.}  
The Twins dataset is derived from all recorded twin births in the United States between 1989 and 1991~\cite{almond2005costs}. 
We focus on twin pairs where both individuals weighed less than 2000 grams at birth. 
Each instance contains 50 pre-treatment covariates related to parental characteristics, pregnancy conditions, and birth outcomes. 
The treatment assignment is defined such that $T = 1$ corresponds to the heavier twin and $T = 0$ to the lighter one. 
The outcome variable $Y$ is one-year mortality.

After removing records with missing values, the resulting dataset comprises 8,244 samples. 
Since data for both twins in each pair is available, we observe outcomes under both treatment assignments ($T = 0$ and $T = 1$). 
To emulate an observational setting, we simulate unobserved counterfactuals by selectively masking one twin per pair. When this selection is randomized, the data mimics a randomized controlled trial. 
In specific, we model confounding via a proxy variable, where we assign treatment based on a single feature—GESTAT10—which represents gestational age in 10 categories. 
Formally, treatment is drawn as:
$T_i \mid X_i, Z_i \sim \operatorname{Bern}\left(\sigma\left(W_o^{\top} X_i + W_h(Z_i / 10 - 0.1)\right)\right),$
where $W_o \sim \mathcal{N}(0, 0.1 \cdot I)$ and $W_h \sim \mathcal{N}(5, 0.1)$. Here, $X_i$ denotes the 49 non-GESTAT10 features and $Z_i$ is the GESTAT10 value for unit $i$.

\textbf{Jobs.}  
This dataset combines the Lalonde randomized experiment (297 treated and 425 control units) with an observational sample from the PSID (2,490 control units)~\cite{lalonde1986evaluating}. 
Each record includes 7 covariates such as age, education level, ethnicity, and prior earnings. The outcome reflects post-intervention employment status. 
By merging the experimental and observational subsets, we can introduce the selection bias between treated and control groups, making this dataset useful for evaluating robustness to such bias.

\subsection{Training and Evaluation Protocols}

\begin{table}[t]
    \setlength{\abovecaptionskip}{0.25cm}
    \centering
    \caption{Hyperparameter search space used in all experiments.}
    \vspace{-0.1cm}
    \resizebox{\columnwidth}{!}{
    \begin{tabular}{c|c|c}
        \hline \hline
        Hyperparameter & Search Range & Description \\
        \hline
        lr & $\{10^{-5}, 10^{-4}, 10^{-3}, 10^{-2}, 10^{-1}\}$ & learning rate \\
        bs & $\{16, 32, 64, 128\}$ & batch size \\
        $\lambda$ & $\{10^{-4}, 10^{-3}, 10^{-2}, 10^{-1}, 1\}$ & loss balancing coefficient \\
        $d_{\phi}$ & $\{16, 32, 64\}$ & hidden dimension in encoder network $\phi$ \\
        $d_{h}$ & $\{16, 32, 64\}$ & hidden dimension in outcome heads $h_0$ and $h_1$ \\
        \hline \hline
    \end{tabular}
    }
    \label{tab:hyper}
    \vspace{-0em}
\end{table}
\textbf{Training Protocol.}
All models are optimized using grid search based on validation performance. The learning rate and batch size are tuned over predefined discrete sets: $\{10^{-5}, 10^{-4}, 10^{-3}, 10^{-2}, 10^{-1}\}$ for learning rates and $\{16, 32, 64, 128\}$ for batch sizes.
For methods involving balancing losses (CFR-Wass, CFR-MMD, and ESCFR), the regularization weight $\lambda$ is selected from $\{10^{-4}, 10^{-3}, 10^{-2}, 10^{-1}, 1\}$ to control the trade-off between outcome prediction and representation alignment.
Table~\ref{tab:hyper} summarizes the full hyperparameter configuration space used during training. All models, including the baselines and our proposed method, are tuned under the same conditions to ensure fair comparison.

Training is performed for a maximum of 200 epochs, with early stopping applied based on validation loss. 
Specifically, we stop the training process if no improvement is observed within 30 consecutive epochs, which helps prevent overfitting—particularly relevant for datasets like Twins, where ground-truth outcomes are fully known.
All experiments are implemented using PyTorch 1.10 and trained with the Adam optimizer. Hardware used includes an NVIDIA A40 GPU and an Intel(R) Xeon(R) Gold 5318Y CPU at 2.10GHz.

\textbf{Evaluation Protocol.}
For the Twins dataset, where the distributions of potential outcomes are available, we evaluate model performance using two metrics: the Precision in Estimation of Heterogeneous Effect ($\epsilon_{PEHE}$) and the Average Treatment Effect error ($\epsilon_{ATE}$)~\cite{hill2011bayesian}. 

The PEHE is defined as:
\[
\epsilon_{PEHE} = \frac{1}{N} \sum_{i=1}^{N} \left( \mathbb{E}_{\left(y^0_i, y^1_i\right) \sim \mathcal{P}_{\mathbf{Y} \mid \mathbf{x}_i}} \left( y^1_i - y^0_i \right) - \left( \hat{y}^1_i - \hat{y}^0_i \right) \right)^2,
\]
where $\hat{y}^0_i$ and $\hat{y}^1_i$ denote the estimated outcomes under control and treatment, respectively, and $y^0_i$ and $y^1_i$ represent the corresponding true outcomes.

For the ATE error, we compute it as:
\[
\epsilon_{ATE} = \left| \frac{1}{N} \sum_{i=1}^{N} \left( y^1_i - y^0_i \right) - \frac{1}{N} \sum_{i=1}^{N} \left( \hat{y}^1_i - \hat{y}^0_i \right) \right|.
\]
Lower values of $\epsilon_{PEHE}$ and $\epsilon_{ATE}$ indicate better estimation performance.

For the Jobs dataset, where ground-truth ITE is not available, we use two metrics: policy risk $\mathcal{R}_{pol}$~\cite{shalit2017estimating} and the error in estimating the Average Treatment effect on the Treated ($\epsilon_{ATT}$). 

Policy risk is defined as:
\[
\mathcal{R}_{pol} = 1 - \left( \mathbb{E}[Y^1 \mid \pi(x) = 1] \cdot \mathbb{P}(\pi(x) = 1) + \mathbb{E}[Y^0 \mid \pi(x) = 0] \cdot \mathbb{P}(\pi(x) = 0) \right),
\]
where $\pi(x) = 1$ if $\hat{y}^1 - \hat{y}^0 > 0$, and $\pi(x) = 0$ otherwise.

We estimate this metric using only units from the randomized component of the dataset:
\begin{equation}
\begin{aligned}
\mathcal{R}_{pol} = 1 - \Bigg( 
 \frac{1}{|A^1 \cap T^1 \cap E|} &\sum_{\mathbf{x}_i \in A^1 \cap T^1 \cap E} y^1_i \cdot \frac{|A^1 \cap E|}{|E|} \\
+ & \frac{1}{|A^0 \cap T^0 \cap E|} \sum_{\mathbf{x}_i \in A^0 \cap T^0 \cap E} y^0_i \cdot \frac{|A^0 \cap E|}{|E|}
\Bigg)
\end{aligned}
\end{equation}

with $E$ denoting the randomized experiment set, $A^1 = \{\mathbf{x}_i: \hat{y}^1_i - \hat{y}^0_i > 0\}$, $A^0 = \{\mathbf{x}_i: \hat{y}^1_i - \hat{y}^0_i < 0\}$, $T^1 = \{\mathbf{x}_i: t_i = 1\}$, and $T^0 = \{\mathbf{x}_i: t_i = 0\}$. 
A lower value of $\mathcal{R}_{pol}$ indicates that the CATE estimation method provides better support for the decision-making strategy.

We also report $\epsilon_{ATT}$ as:
\[
\epsilon_{ATT} = \left| \frac{1}{N_1} \sum_{i: t_i = 1} \left( y^1_i - y^0_i \right) - \frac{1}{N_1} \sum_{i: t_i = 1} \left( \hat{y}^1_i - \hat{y}^0_i \right) \right|,
\]
where $N_1$ is the number of treated units in the randomized group. A lower $\epsilon_{\text{ATT}}$ indicates more accurate treatment effect estimation for the treated population.

\subsection{Base Models}
Since our proposed ProCI framework is model-agnostic and only augments the original dataset with new confounders, it can be flexibly combined with a variety of existing CATE estimation methods. In our experiments, we treat several well-established and state-of-the-art CATE estimators as base models to assess how their performance improves when equipped with the confounders generated by ProCI.

We consider representative methods from three major categories: meta-learning, matching-based, and representation-based approaches.

\textit{i}) \textbf{Meta-learners}: These methods differ in how they handle treatment information. The \textbf{S-Learner}~\cite{kunzel2019metalearners} uses a single model that includes treatment as a feature. 

\textit{ii}) \textbf{Matching-based methods}: We include propensity score matching (\textbf{PSM})~\cite{caliendo2008some}, followed by regression on the matched samples. Propensity scores in PSM are estimated using logistic regression, consistent with the implementation in~\cite{caliendo2008some}.  

\textit{iii}) \textbf{Representation-based methods}: This includes \textbf{TARNet}~\cite{shalit2017estimating}, which employs a shared feature representation with separate heads for predicting potential outcomes; \textbf{CFR-Wass}~\cite{shalit2017estimating}, which adds distributional regularization using the Wasserstein metric; and \textbf{ESCFR}~\cite{wang2023optimal}, which leverages unbalanced optimal transport to achieve mini-batch-level representation balance and robustness to outliers.

\section{Additional Experimental Results}

\begin{table}[t]
    \centering
    \caption{Overall performance comparison of treatment effect estimation between base models and their enhanced versions with ProCI-La (LLaMA 3-8B) and ProCI-Qw (Qwen2.5-7B). Best-performing results across all methods are highlighted.}
    \LARGE
    \resizebox{\textwidth}{!}{
        \renewcommand{\arraystretch}{1.3} 
    \begin{tabular}{l|cccc|cccc}
    \hline \hline
        Datasets&\multicolumn{4}{c|}{Jobs}&\multicolumn{4}{c}{Twins} \\ \hline
        Test Types&\multicolumn{2}{c}{In-sample}&\multicolumn{2}{c|}{Out-sample}&\multicolumn{2}{c}{In-sample}&\multicolumn{2}{c}{Out-sample} \\ \hline
        Methods & $\epsilon_{ATT}$ & $\mathcal{R}_{pol}$  & $\epsilon_{ATT}$ & $\mathcal{R}_{pol}$ & $\epsilon_{ATE}$ & $\epsilon_{PEHE}$& $\epsilon_{ATE}$ & $\epsilon_{PEHE}$ \\ \hline
        \textbf{S-Learner} & 0.0491$_{\pm \text{0.0011}}$ & 0.2289$_{\pm \text{0.0008}}$ & 0.0876$_{\pm \text{0.0018}}$ & 0.1678$_{\pm \text{0.0002}}$ & 0.0131$_{\pm \text{0.0020}}$ & 0.2527$_{\pm \text{0.0012}}$ & 0.0037$_{\pm \text{0.0024}}$ & \cellcolor[HTML]{F7F1B8}0.2819$_{\pm \text{0.0001}}$ \\ 
        +ProCI-La & \cellcolor[HTML]{F7F1B8}0.0227$_{\pm \text{0.0004}}$ & \cellcolor[HTML]{F7F1B8}0.2253$_{\pm \text{0.0013}}$ & 0.1009$_{\pm \text{0.0001}}$ & \cellcolor[HTML]{F7F1B8}0.1648$_{\pm \text{0.0002}}$ & 0.0117$_{\pm \text{0.0001}}$ & 0.2536$_{\pm \text{0.0000}}$ & \cellcolor[HTML]{F7F1B8}0.0035$_{\pm \text{0.0029}}$ & 0.2833$_{\pm \text{0.0032}}$ \\ 
        +ProCI-Qw &0.0320$_{\pm \text{0.0007}}$ & 0.2301$_{\pm \text{0.0000}}$ & \cellcolor[HTML]{F7F1B8}0.0832$_{\pm \text{0.0001}}$ & 0.1697$_{\pm \text{0.0010}}$ & \cellcolor[HTML]{F7F1B8}0.0068$_{\pm \text{0.0013}}$ & \cellcolor[HTML]{F7F1B8}0.2518$_{\pm \text{0.0027}}$ & 0.0050$_{\pm \text{0.0004}}$ & 0.2846$_{\pm \text{0.0012}}$ \\ \hline
        \textbf{PSM} & 0.6197$_{\pm \text{0.0000}}$ & 0.2707$_{\pm \text{0.0000}}$ & 0.1259$_{\pm \text{0.0015}}$ & 0.2192$_{\pm \text{0.0013}}$ & 0.0457$_{\pm \text{0.0006}}$ & 0.3399$_{\pm \text{0.0007}}$ & 0.0840$_{\pm \text{0.0000}}$ & 0.4027$_{\pm \text{0.0001}}$ \\ 
        +ProCI-La & 0.6185$_{\pm \text{0.0008}}$ & 0.2716$_{\pm \text{0.0003}}$ & 0.1126$_{\pm \text{0.0002}}$ & \cellcolor[HTML]{F7D7E1}0.2181$_{\pm \text{0.0001}}$ & 0.0460$_{\pm \text{0.0012}}$ & \cellcolor[HTML]{F7D7E1}0.3394$_{\pm \text{0.0004}}$ & 0.0845$_{\pm \text{0.0000}}$ & 0.4026$_{\pm \text{0.0000}}$ \\ 
        +ProCI-Qw & \cellcolor[HTML]{F7D7E1}0.6173$_{\pm \text{0.0000}}$ & \cellcolor[HTML]{F7D7E1}0.2691$_{\pm \text{0.0001}}$ & \cellcolor[HTML]{F7D7E1}0.0992$_{\pm \text{0.0003}}$ & 0.2191$_{\pm \text{0.0035}}$ & \cellcolor[HTML]{F7D7E1}0.0455$_{\pm \text{0.0037}}$ & 0.3401$_{\pm \text{0.0000}}$ & \cellcolor[HTML]{F7D7E1}0.0818$_{\pm \text{0.0001}}$ & \cellcolor[HTML]{F7D7E1}0.4017$_{\pm \text{0.0000}}$ \\ \hline
        \textbf{TARNet} & 0.0191$_{\pm \text{0.0002}}$ & 0.2177$_{\pm \text{0.0001}}$ & 0.1466$_{\pm \text{0.0026}}$ & 0.2201$_{\pm \text{0.0002}}$ & 0.0233$_{\pm \text{0.0044}}$ & 0.2917$_{\pm \text{0.0001}}$ & 0.0310$_{\pm \text{0.0005}}$ & 0.3237$_{\pm \text{0.0001}}$ \\ 
        +ProCI-La & 0.0163$_{\pm \text{0.0021}}$ & 0.2059$_{\pm \text{0.0027}}$ & 0.1179$_{\pm \text{0.0032}}$ & \cellcolor[HTML]{D9E6F2}0.2139$_{\pm \text{0.0011}}$ & 0.0190$_{\pm \text{0.0021}}$ & \cellcolor[HTML]{D9E6F2}0.2755$_{\pm \text{0.0013}}$ & 0.0270$_{\pm \text{0.0019}}$ & \cellcolor[HTML]{D9E6F2}0.3043$_{\pm \text{0.0012}}$ \\
        +ProCI-Qw &\cellcolor[HTML]{D9E6F2}0.0129$_{\pm \text{0.0001}}$ & \cellcolor[HTML]{D9E6F2}0.2005$_{\pm \text{0.0000}}$ & \cellcolor[HTML]{D9E6F2}0.0761$_{\pm \text{0.0010}}$ & 0.2201$_{\pm \text{0.0032}}$ & \cellcolor[HTML]{D9E6F2}0.0101$_{\pm \text{0.0000}}$ & 0.2775$_{\pm \text{0.0014}}$ & \cellcolor[HTML]{D9E6F2}0.0092$_{\pm \text{0.0001}}$ & 0.3123$_{\pm \text{0.0002}}$ \\ \hline
        \textbf{CFR-Wass} & 0.0355$_{\pm \text{0.0006}}$ & 0.2150$_{\pm \text{0.0001}}$ & 0.1487$_{\pm \text{0.0028}}$ & 0.2191$_{\pm \text{0.0004}}$ & 0.0189$_{\pm \text{0.0000}}$ & 0.2818$_{\pm \text{0.0000}}$ & 0.0186$_{\pm \text{0.0002}}$ & 0.3138$_{\pm \text{0.000}}$ \\
        +ProCI-La & 0.0312$_{\pm \text{0.0003}}$ & \cellcolor[HTML]{E9F4E4}0.2023$_{\pm \text{0.0001}}$ & \cellcolor[HTML]{E9F4E4}0.1373$_{\pm \text{0.0073}}$ & \cellcolor[HTML]{E9F4E4}0.1986$_{\pm \text{0.0004}}$ & \cellcolor[HTML]{E9F4E4}0.0132$_{\pm \text{0.0001}}$ & 0.2762$_{\pm \text{0.0001}}$ & 0.0150$_{\pm \text{0.0001}}$ & 0.3073$_{\pm \text{0.0001}}$ \\ 
        +ProCI-Qw & \cellcolor[HTML]{E9F4E4}0.0295$_{\pm \text{0.0004}}$ & 0.2029$_{\pm \text{0.0001}}$ & 0.1389$_{\pm \text{0.0068}}$ & 0.2140$_{\pm \text{0.0002}}$ & 0.0137$_{\pm \text{0.0000}}$ & \cellcolor[HTML]{E9F4E4}0.2724$_{\pm \text{0.0000}}$ & \cellcolor[HTML]{E9F4E4}0.0110$_{\pm \text{0.0001}}$ & \cellcolor[HTML]{E9F4E4}0.3041$_{\pm \text{0.0002}}$ \\ \hline
        \textbf{ESCFR} & 0.0543$_{\pm \text{0.0012}}$ & 0.2184$_{\pm \text{0.0001}}$ & 0.2245$_{\pm \text{0.0390}}$ & 0.2274$_{\pm \text{0.0002}}$ & 0.0199$_{\pm \text{0.0001}}$ & 0.2715$_{\pm \text{0.0001}}$ & 0.0207$_{\pm \text{0.0003}}$ & 0.3059$_{\pm \text{0.0007}}$ \\ 
        +ProCI-La &0.0362$_{\pm \text{0.0005}}$ & \cellcolor[HTML]{EEEBD6}0.2074$_{\pm \text{0.0000}}$  & \cellcolor[HTML]{EEEBD6}0.1218$_{\pm \text{0.0038}}$ & \cellcolor[HTML]{EEEBD6}0.2055$_{\pm \text{0.0015}}$ & 0.0160$_{\pm \text{0.0001}}$ & 0.2714$_{\pm \text{0.0002}}$ & 0.0177$_{\pm \text{0.0001}}$ & 0.3030$_{\pm \text{0.0001}}$ \\ 
        +ProCI-Qw & \cellcolor[HTML]{EEEBD6}0.0317$_{\pm \text{0.0008}}$ & 0.2154$_{\pm \text{0.0011}}$ & 0.2114$_{\pm \text{0.0430}}$ & 0.2239$_{\pm \text{0.0033}}$ & \cellcolor[HTML]{EEEBD6}0.0131$_{\pm \text{0.0001}}$ & \cellcolor[HTML]{EEEBD6}0.2699$_{\pm \text{0.0021}}$ & \cellcolor[HTML]{EEEBD6}0.0062$_{\pm \text{0.0000}}$ & \cellcolor[HTML]{EEEBD6}0.2989$_{\pm \text{0.0001}}$ \\ \hline
    \end{tabular}
    }
    \vspace{-0.5em}
    \label{tab:appendix_cate}
\end{table}

\subsection{Effectiveness of ProCI with Open-Source Language Models}
\textbf{Experimental Setup.} To further investigate the generalizability of ProCI under different LLM architectures, we introduce two additional models: LLaMA 3–8B and Qwen2.5–7B, denoted as \textbf{ProCI-La} and \textbf{ProCI-Qw}, respectively. These models are selected for their competitive reasoning capabilities and open accessibility. ProCI-La is based on Meta’s LLaMA 3 series~\cite{grattafiori2024llama}, a dense decoder-only transformer optimized for instruction-following. ProCI-Qw leverages Alibaba’s Qwen2.5 family~\cite{qwen2025qwen25technicalreport}, which has shown strong performance in multi-lingual and causal reasoning tasks.

We apply the same ProCI framework using LLaMA 3–8B and Qwen2.5–7B as the confounder generators. All settings (e.g., temperature = 0.7, prompt structure, distribution identification and imputation, progressive confounder generation) remain consistent with the original experiments to ensure fair comparisons. The resulting augmented datasets are then passed into the same downstream CATE estimators as in the original setup.

\textbf{Results.} As shown in Table~\ref{tab:appendix_cate}, both ProCI-La and ProCI-Qw significantly improve CATE estimation performance over their corresponding base models, confirming the effectiveness of using LLM-generated hidden confounders even beyond proprietary GPT models. Notably, the improvements are consistent across different types of base estimators, with the following observations:

\begin{itemize}
    \item \textbf{ProCI-La, based on LLaMA 3--8B, achieves strong performance gains on both in-sample and out-of-sample evaluations.} Its improvements are particularly evident when paired with representation-based base models such as TARNet and CFR-Wass, indicating that LLaMA 3's semantic reasoning helps uncover latent variables that enhance feature balancing in the learned representations.
    
    \item \textbf{ProCI-Qw, leveraging Qwen2.5--7B, also brings stable improvements over base models.} While its performance slightly lags behind ProCI-La in some cases, it still consistently enhances treatment effect estimation, especially under models sensitive to unobserved confounding. This suggests Qwen2.5 can effectively contribute causal priors despite its smaller scale.
    
    \item \textbf{Across both models, we observe that representation learning-based estimators benefit the most from ProCI augmentation.} These models are designed to learn balanced representations of treated and control groups, and the inclusion of high-quality confounders improves this balancing, thereby reducing estimation bias and variance more effectively.
\end{itemize}

These findings demonstrate that ProCI is model-agnostic and remains effective when paired with open-source, instruction-tuned LLMs. This extends its practical applicability and offers a cost-efficient, scalable solution for treatment effect estimation under hidden confounding in real-world settings.

\subsection{Impact of Temperature on Confounder Quality}
\begin{figure}[t]
	\setlength{\abovecaptionskip}{0.2cm}
	\setlength{\fboxrule}{0.pt}
	\setlength{\fboxsep}{0.pt}
	\centering
	\subfigure{
		\includegraphics[width=0.45\textwidth]{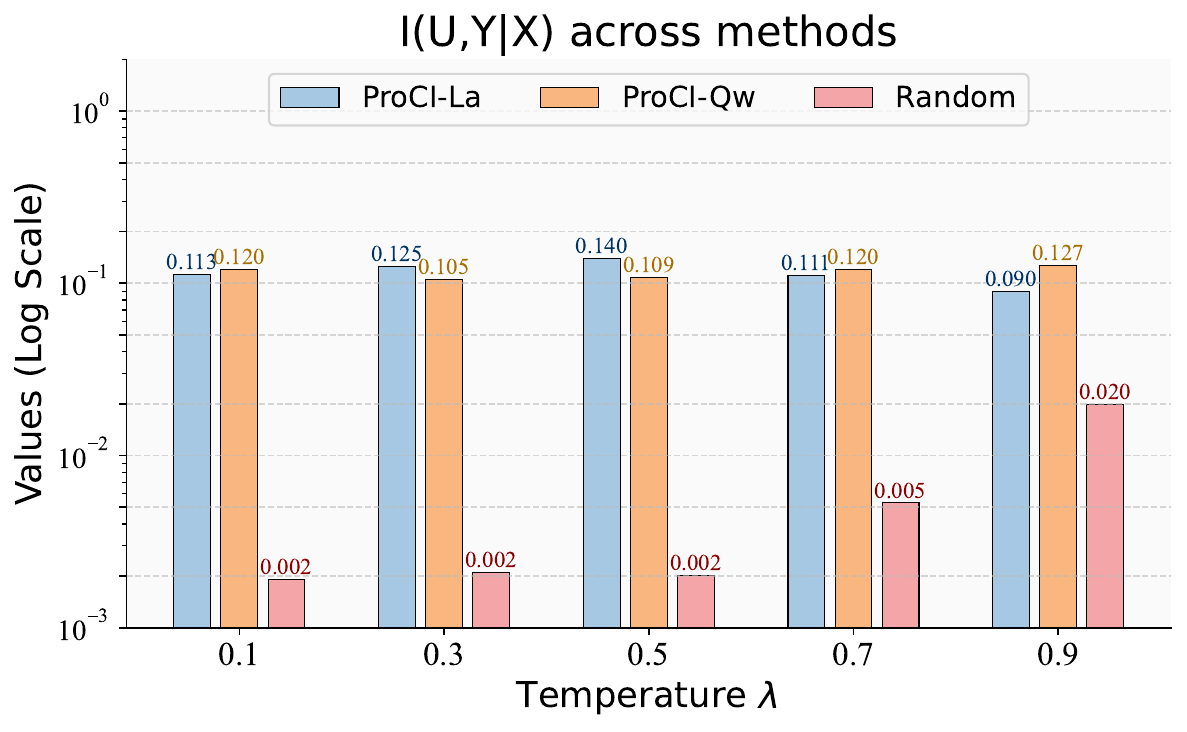}	
	}
	\subfigure{
		\includegraphics[width=0.45\textwidth]{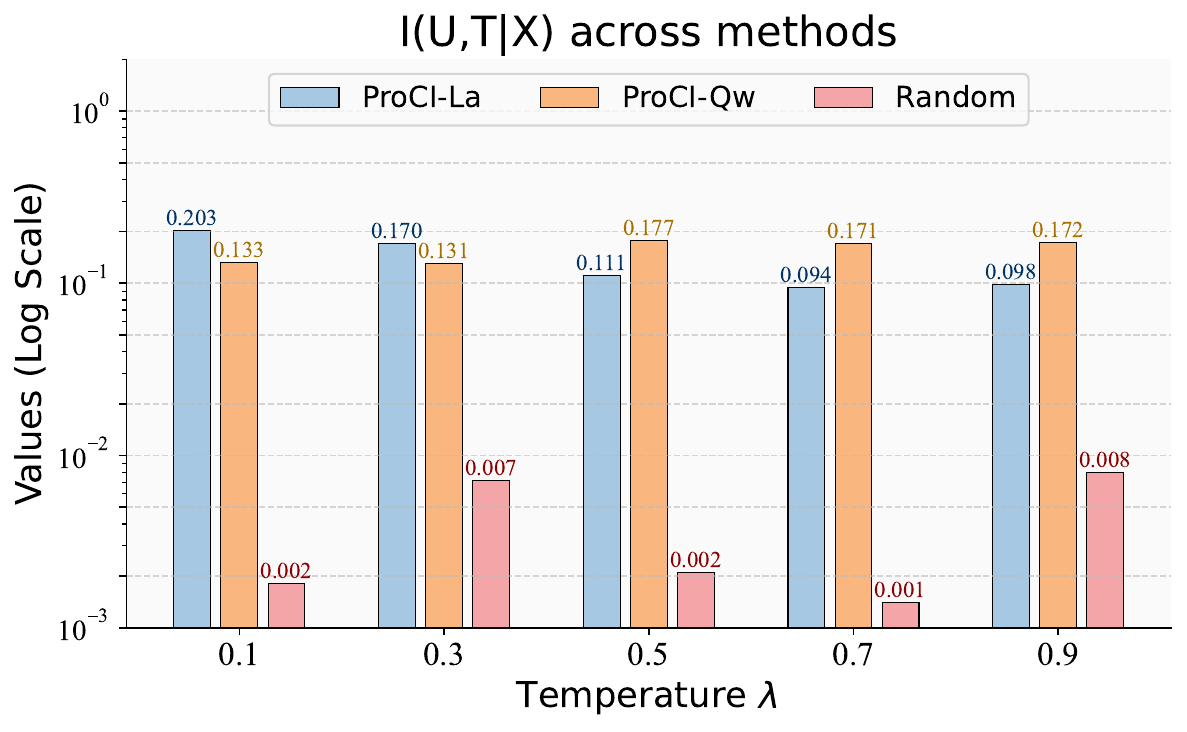}
	}
	\vspace{-0cm}
	\caption{\textbf{Conditional mutual information (CMI) values across different temperatures for confounders generated by ProCI-La (LLaMA 3--8B), ProCI-Qw (Qwen2.5--7B), and a Random baseline.} 
    The left plot shows $I(U, Y \mid X)$, and the right plot shows $I(U, T \mid X)$, with all values on a log scale. ProCI methods consistently outperform the Random baseline, with peak informativeness generally occurring at moderate temperatures (0.5--0.7).}\label{fig:appendix_mi}
	\vspace{-0.5cm}
\end{figure}
\textbf{Experimental Setup.}  
To assess how the temperature coefficient $\lambda$ affects the quality of generated confounders, we conduct a controlled experiment using two open-source LLMs: \textbf{LLaMA 3--8B} (\textbf{ProCI-La}) and \textbf{Qwen2.5--7B} (\textbf{ProCI-Qw}). We vary $\lambda$ in the range $\{0.1, 0.3, 0.5, 0.7, 0.9\}$, and for each value, use the respective LLM to generate confounders following the standard ProCI framework. To evaluate the informativeness of the generated variables, we compute their conditional mutual information (CMI) with the treatment and outcome, conditioned on observed covariates: $I(U, Y \mid X)$ and $I(U, T \mid X)$.
We also include a \textbf{Random} baseline that generates synthetic variables from uniform or Gaussian noise independent of the data. This helps isolate the contribution of semantically meaningful generation from LLMs.

\textbf{Results.}  
Figure~\ref{fig:appendix_mi} shows the CMI values $I(U, Y \mid X)$ and $I(U, T \mid X)$ across different temperature values. Each x-axis tick corresponds to a specific temperature coefficient $\lambda$, and we compare three methods: Random, ProCI-La, and ProCI-Qw.

\begin{itemize}
    \item \textbf{ProCI-La and ProCI-Qw significantly outperform the Random baseline at all temperatures}, with CMI values often one to two orders of magnitude higher, validating that LLM-generated confounders encode semantically relevant information about treatment and outcome.
    
    \item \textbf{The impact of temperature varies by method and metric.} For $I(U, Y \mid X)$, ProCI-La achieves the highest value at $\lambda = 0.5$, while ProCI-Qw peaks at $\lambda = 0.9$. For $I(U, T \mid X)$, ProCI-La performs best at the lowest temperature ($\lambda = 0.1$), whereas ProCI-Qw shows more stable performance across mid-to-high temperatures.
    
    \item \textbf{ProCI-La generally outperforms ProCI-Qw in capturing outcome-relevant information}, especially at lower temperatures. In contrast, ProCI-Qw sometimes surpasses ProCI-La on treatment-related informativeness ($I(U, T \mid X)$), suggesting complementary strengths between the LLMs.
\end{itemize}

These findings reinforce the benefit of using temperature tuning to control the diversity and informativeness of generated confounders. Moderate values ($\lambda \in [0.5, 0.7]$) typically offer the best trade-off, with performance degrading slightly at the extremes.

\section{Prompt Templates}
In this section, we present the detailed prompt templates used in the proposed ProCI framework. These prompts correspond to the four key components of our method: variable generation ($\mathcal{P}_{\text{var}}(X, Y, T)$ in Eq.~(4)), distribution type inference ($\mathcal{P}_{\text{dist}}(X, Y, T, \hat{U})$ in Eq.~(5)), parameter estimation ($\mathcal{P}_{\text{param}}(x_i, t_i, y_i)$ in Eq.~(6)), and counterfactual outcome imputation ($\mathcal{P}_{\text{out}}(x_i, u_i, y_i, t_i)$ in Eq.~(7)). 
All corresponding equations are provided in the main paper, and this appendix serves to elaborate on the concrete prompt implementations used for each component.

\subsection{Prefix Prompt}

The prefix prompt provides essential contextual information about the observational dataset, including a brief overview and detailed descriptions of the treatment, outcome, and confounding variables. This prompt serves as a foundation and should be included at the beginning of all subsequent prompts to ensure that the LLM is aware of the data background and variable semantics.

\phantomsection
\begin{tcolorbox}[colback=white!95!gray,
    colframe=gray!50!black,
    rounded corners,
    title = {Prefix prompt: Dataset Introduction},
    fontupper=\sffamily,
    fontlower=\sffamily,
    fonttitle=\bfseries\sffamily
    ]
\setstretch{1.5}
\textbf{Inputs:} The dataset name $\mathcal{D}_{\text{name}}$ with a brief introduction $\mathcal{D}_{\text{intro}}$; variable names for confounders $X_{\text{name}}$, treatment $T_{\text{name}}$, and outcome $Y_{\text{name}}$; and their corresponding descriptions as provided by the original dataset: $X_{\text{desc}}$, $T_{\text{desc}}$, and $Y_{\text{desc}}$.
\tcblower
\setstretch{1.5}
\textbf{Prompt:} \\ 
Brief introduction of the $\{\mathcal{D}_{\text{name}}\}$ dataset: $\{\mathcal{D}_{\text{intro}}\}$\\
This observational dataset contains: \\
(1) Treatment — $\{T_{\text{name}}\}$: $\{T_{\text{desc}}\}$\\
(2) Outcome — $\{Y_{\text{name}}\}$: $\{Y_{\text{desc}}\}$\\
(3) Confounders — $\{X_{\text{name}}\}$: $\{X_{\text{desc}}\}$
\end{tcolorbox}

\subsection{Variable Generation}
In this prompt, we mainly utilize the name and description of variables $X, T$ and $Y$ in the observational dataset to infer a new confounder $\hat{U}$.
\phantomsection
\begin{tcolorbox}[colback=white!95!gray,
    colframe=gray!50!black,
    rounded corners,
    title = {$\mathcal{P}_{\text{var}}(X, Y, T)$}: Generating new confounder $\hat{U}$,
    fontupper=\sffamily, 
    fontlower=\sffamily,
    fonttitle=\bfseries\sffamily 
    ]
\setstretch{1.5} 
\textbf{Inputs:} Prefix Prompt \\
\textbf{Outputs:} Confounder name $\hat{U}_{\text{name}}$, a brief explanation $\hat{U}_{\text{exp}}$
\tcblower
\setstretch{1.5} 
\textbf{Prompts:}\\ 
$\{\text{{Prefix prompt}}\}$ \\
Based on your \textsc{world knowledge}, please propose one additional confounder which \textsc{both} affects the treatment and outcome.\\
Make sure that the proposed confounder has a \textsc{different meaning} compared to existing confounders. \\
For this proposed confounder, please provide: \\
(1) A clear name for the confounder. \\
(2) A brief explanation of why it affects both treatment and outcome. 
\end{tcolorbox}

\subsection{Distribution type inference}
After identifying the confounder variable, we leverage the commonsense knowledge embedded in LLMs to infer an appropriate distribution type for it.  
Instead of directly imputing its values using the LLM, which often leads to degenerate or collapsed outputs when applied to tabular data, we defer value imputation to a subsequent structured process.

\phantomsection
\begin{tcolorbox}[colback=white!95!gray,
    colframe=gray!50!black,
    rounded corners,
    title = {$\mathcal{P}_{\text{dist}}(X, Y, T, \hat{U})$: Inferring the distribution type of $\hat{U}$},
    fontupper=\sffamily, 
    fontlower=\sffamily,
    fonttitle=\bfseries\sffamily 
    ]
\setstretch{1.5} 
\textbf{Inputs:} Prefix prompt, name of generated variable $\hat{U}_{\text{name}}$ \\
\textbf{Outputs:} Distribution type $\mathcal{F}_{\hat{U}}$
\tcblower
\setstretch{1.5} 
\textbf{Prompts:}\\ 
$\{\text{{Prefix Prompt}}\}$ \\
Based on your \textsc{world knowledge}, please provide the distribution type of confounder $\{\hat{U}_{\text{name}}\}$. For example:\\
(1) Continuous — e.g., Normal distribution \\
(2) Discrete — e.g., Multi-categorical distribution \\
(3) Binary — e.g., Bernoulli distribution
\end{tcolorbox}

\subsection{Parameter Estimation}
Given the inferred distribution type $\mathcal{F}_{\hat{U}}$ of the confounder $\hat{U}$, the next step is to estimate the corresponding distribution parameters.  
As there are various possible distribution families, we illustrate the parameter estimation process using the normal distribution as an example.

\phantomsection
\begin{tcolorbox}[colback=white!95!gray,
    colframe=gray!50!black,
    rounded corners,
    title = {$\mathcal{P}_{\text{param}}(\mathcal{D}_X, \mathcal{D}_T, \mathcal{D}_Y)$: Generating the distribution parameter for each unit},
    fontupper=\sffamily, 
    fontlower=\sffamily,
    fonttitle=\bfseries\sffamily 
    ]
\setstretch{1.5} 
\textbf{Inputs:} Prefix prompt, the confounder name $\hat{U}_{\text{name}}$, the values of confounder $\mathcal{D}_{X}$, treatment $\mathcal{D}_{T}$ and outcome $\mathcal{D}_{Y}$ \\
\textbf{Outputs:} Distribution parameter $\theta_i=\{\mu_i, \sigma_i\}$ for each individual $i$
\tcblower
\setstretch{1.5} 
\textbf{Prompts:}\\ 
$\{\text{{Prefix prompt}}\}$ \\
The values of existing confounders, treatments, and outcomes are given by: \\
(1) Confounder Values: $\{\mathcal{D}_{X}\}$ \\
(2) Treatment Values:  $\{\mathcal{D}_{T}\}$ \\
(3) Outcome Values: $\{\mathcal{D}_{Y}\}$ \\
For the confounder $\{\hat{U}_{\text{name}}\}$, please specify a normal distribution (mean and standard deviation) for each individual from which we can sample the confounder value.
\end{tcolorbox}
To accommodate the token limitations of LLMs, this prompt is executed in a mini-batch manner.  
Once the distribution parameters (e.g., mean and standard deviation in the case of a normal distribution) are obtained, we sample concrete values of the confounder $\hat{U}$ from the personalized distribution for each instance.

\subsection{Counterfactual Outcome Imputation}
To evaluate the effectiveness of the generated confounders, we assess whether the unconfoundedness assumption holds after incorporating them.  
Since LLMs implicitly encode a wide range of commonsense and domain-specific knowledge—including information related to potential hidden confounders—we utilize the LLM to impute counterfactual outcomes, $\hat{Y}^0$ and $\hat{Y}^1$.  
These counterfactuals are then used to perform an empirical test of the unconfoundedness assumption via conditional independence analysis.

\phantomsection
\begin{tcolorbox}[colback=white!95!gray,
    colframe=gray!50!black,
    rounded corners,
    title = {$\mathcal{P}_{\text{out}}(\mathcal{D}_X, \mathcal{D}_{\hat{U}}, \mathcal{D}_T, \mathcal{D}_Y)$: Imputing Counterfactual Outcomes},
    fontupper=\sffamily, 
    fontlower=\sffamily,
    fonttitle=\bfseries\sffamily 
    ]
\setstretch{1.5} 
\textbf{Inputs:} Prefix prompt, the values of confounder $\mathcal{D}_{X}$, imputed confounder $\mathcal{D}_{\hat{U}}$, treatment $\mathcal{D}_{T}$ and outcome $\mathcal{D}_{Y}$ \\
\textbf{Outputs:} Counterfactual outcomes in $\hat{Y}^0$ and $\hat{Y}^1$
\tcblower
\setstretch{1.5} 
\textbf{Prompts:}\\ 
$\{\text{{Prefix prompt}}\}$ \\
The values of existing confounders, treatments, and outcomes are given by: \\
(1) Confounders: $\{\mathcal{D}_{X}\}$ \\
(2) Treatments:  $\{\mathcal{D}_{T}\}$ \\
(3) Outcomes: $\{\mathcal{D}_{Y}\}$ \\
Based on the \textsc{observed data} and your \textsc{world knowledge}, please infer the values of the counterfactual outcome corresponding to the alternative value of treatment.
\end{tcolorbox}

\section{Case Study}
In this section, we provide two running samples from Jobs dataset for both variable generation and value imputation.
\subsection{Case on Variable Generation}
\phantomsection
\begin{tcolorbox}[colback=white!95!gray,
    colbacktitle=blue!70!black,
    colframe=gray!50!black,
    rounded corners,
    title = {Generating New Variable \textsc{\{Transportation Access\}}},
    fontupper=\sffamily, 
    fontlower=\sffamily,
    fonttitle=\bfseries\sffamily, 
    breakable,
    pad at break*=1mm]
\setstretch{1.5} 
\textbf{Prompts:}\\ 
Brief introduction of the \{Jobs\} dataset: \\
\{The Jobs dataset is widely used in causal inference research for evaluating the performance of treatment effect estimation methods. It is constructed by combining experimental and observational data from the National Supported Work (NSW) demonstration and comparison group data (e.g., PSID or CPS).\} \\ 

This observational dataset contains: \\ 
(1) Treatment — \{Participation in Job Training\}: \{$T\in{\{0, 1\}}$ indicating whether the individual participated or did not participate in the job training program.\}\\ 
(2) Outcome — \{Employment Status\}: \{The individual's employment observed after the treatment decision. $Y=1$ means employed, $Y=0$ means not employed.\}\\
(3) Confounders — \{Age, Marriage, Previous Earning,...\}: \{Features affecting both the treatment and the outcome, such as age, education, prior income, etc.\}\\

Based on your \textsc{world knowledge}, please propose one additional confounder which \textsc{both} affects the treatment and outcome.\\
Make sure that the proposed confounder has a different meaning compared to existing confounders. \\
For this proposed confounder, please provide: \\
(1) A clear name for the confounder. \\
(2) A brief explanation of why it affects both treatment and outcome. 
\tcblower
\setstretch{1.5} 
\textbf{Answers:}\\
(1) Variable Name: Transportation Access \\
(2) Explanation: Access to reliable transportation can influence both participation in the job training program (treatment) and subsequent employment (outcome). Individuals without transportation may be less likely to enroll in or attend the program due to logistical barriers. Similarly, lack of transportation can hinder job search efforts and commuting to workplaces, reducing the likelihood of employment.  
\end{tcolorbox}

\subsection{Case on Value Imputation}
\phantomsection
\begin{tcolorbox}[colback=white!95!gray,
    colbacktitle=blue!70!black,
    colframe=gray!50!black,
    rounded corners,
    title = {Inferring Distribution Type for New Variable \textsc{\{Transportation Access\}}},
    fontupper=\sffamily, 
    fontlower=\sffamily,
    fonttitle=\bfseries\sffamily, 
    breakable,
    pad at break*=1mm]
\setstretch{1.5} 
\textbf{Prompts:}\\ 
Brief introduction of the \{Jobs\} dataset: \\
\{The Jobs dataset is widely used in causal inference research for evaluating the performance of treatment effect estimation methods. It is constructed by combining experimental and observational data from the National Supported Work (NSW) demonstration and comparison group data (e.g., PSID or CPS).\} \\ 

This observational dataset contains: \\ 
(1) Treatment — \{Participation in Job Training\}: \{$T\in{\{0, 1\}}$ indicating whether the individual participated or did not participate in the job training program.\}\\ 
(2) Outcome — \{Employment Status\}: \{The individual's employment observed after the treatment decision. $Y=1$ means employed, $Y=0$ means not employed.\}\\
(3) Confounders — \{Age, Marriage, Previous Earning,...\}: \{Features affecting both the treatment and the outcome, such as age, education, prior income, etc.\}\\

Based on your \textsc{world knowledge}, please provide the distribution type of confounder \{Transportation Access\}. 
For example:\\
(1) Continuous — e.g., Normally distribution \\
(2) Discrete — e.g., Multi-categorical distribution \\
(3) Binary — e.g., Bernoulli distribution
\tcblower
\setstretch{1.5} 
\textbf{Answers:}\\
(1) Distribution Type: Binary \\
(2) Value Description: 0 indicates the individual lacks reliable transportation (e.g., no personal vehicle or inadequate public transit access), and 1 indicates they have reliable transportation.
\end{tcolorbox}

\phantomsection
\begin{tcolorbox}[colback=white!95!gray,
    colbacktitle=blue!70!black,
    colframe=gray!50!black,
    rounded corners,
    title = {Estimating Values for New Binary Confounder \textsc{\{Transportation Access\}}},
    fontupper=\sffamily, 
    fontlower=\sffamily,
    fonttitle=\bfseries\sffamily, 
    breakable,
    pad at break*=1mm]
\setstretch{1.5} 
\textbf{Prompts:}\\ 
Brief introduction of the \{Jobs\} dataset: \\
\{The Jobs dataset is widely used in causal inference research for evaluating the performance of treatment effect estimation methods. It is constructed by combining experimental and observational data from the National Supported Work (NSW) demonstration and comparison group data (e.g., PSID or CPS).\} \\ 

This observational dataset contains: \\ 
(1) Treatment — \{Participation in Job Training\}: \{$T\in{\{0, 1\}}$ indicating whether the individual participated or did not participate in the job training program.\}\\ 
(2) Outcome — \{Employment Status\}: \{The individual's employment observed after the treatment decision. $Y=1$ means employed, $Y=0$ means not employed.\}\\
(3) Confounders — \{Age, Marriage, Previous Earning,...\}: \{Features affecting both the treatment and the outcome, such as age, education, prior income, etc.\}\\

The values of existing confounders, treatments, and outcomes are given by: \\
(1) Confounder Values: \{[47,12,...,0], ..., [40, 8, ..., 0]\} \\
(2) Treatment Values: \{0,0,1,0,...,1\} \\
(3) Outcome Values: \{1,0,0,1,...,0\} \\
For the new confounder \{Transportation Access\}, please specify a binary value for each individual.

\tcblower
\setstretch{1.5} 
\textbf{Answers:}\\
(1) Values: [0,0,0,...,1] \\
(2) Summary: The first 82 individuals (previous earning = 0) lack reliable transportation (Transportation Access = 0), while the remaining 68 (previous earning > 0) have access (Transportation Access = 1). This aligns with prior earnings as a proxy for transportation availability, creating a plausible confounder structure.
\end{tcolorbox}

\section{Broader Impacts}
This work introduces ProCI, a framework that uses LLMs to help mitigate hidden confounding when estimating treatment effects from observational data. 
By combining structured and unstructured data with the general knowledge and reasoning abilities of LLMs, ProCI offers a new and useful way to improve causal inference.
In practice, the ProCI framework can improve decision-making in areas where running controlled experiments is difficult or unethical, such as healthcare, social programs, and economic policies.
It can help uncover hidden confounders that affect treatment and outcomes, leading to fairer and more informed decisions—especially in places with limited resources.
Also, by reducing the need for expert-designed tools or domain knowledge, ProCI makes causal analysis easier and more available to a wider group of researchers and practitioners.

\section{Limitations}
While our proposed method consistently outperforms base models across benchmarks, it still exhibits several limitations that warrant further investigation:
\begin{itemize}
    \item Our evaluation is conducted on two widely-used observational benchmarks—Twins and Jobs. To better assess the generalizability of the ProCI framework, future work should explore a broader range of real-world and domain-specific datasets.
    \item As noted in Theorem~1, the empirical unconfoundedness test using Kernel Conditional Independence Test (KCIT) on imputed counterfactuals approximates the true test only when the sample size is sufficiently large. More robust or distribution-free statistical tests may be needed to relax this assumption in smaller datasets.
    \item While our study extends confounder generation to include four distinct LLMs—GPT-4o, DeepSeek-R1, LLaMA 3–8B, and Qwen2.5–7B—these models primarily represent instruction-tuned decoders. Future work should further examine ProCI’s applicability across a wider range of architectures, such as multilingual models, encoder-decoder frameworks, or smaller-scale LLMs, to comprehensively assess its robustness and scalability.

\end{itemize}


\end{document}